\documentclass{article} 
\usepackage{iclr2025_conference}
\iclrfinalcopy

\usepackage{times}

\usepackage{amsmath,amsfonts,bm}









\def\eqref#1{equation~\ref{#1}}









\def\1{\bm{1}}










\DeclareMathAlphabet{\mathsfit}{\encodingdefault}{\sfdefault}{m}{sl}
\SetMathAlphabet{\mathsfit}{bold}{\encodingdefault}{\sfdefault}{bx}{n}













\usepackage[utf8]{inputenc} 
\usepackage[T1]{fontenc}    
\usepackage{hyperref}       
\usepackage{url}            
\usepackage{booktabs}       
\usepackage{amsfonts}       
\usepackage{nicefrac}       
\usepackage{microtype}      
\usepackage{xcolor}         

\usepackage{hyperref}
\usepackage{url}
\usepackage{amsmath}
\usepackage{amssymb}
\usepackage{nicefrac}
\usepackage{mathtools}
\usepackage{amsthm}
\usepackage{mathrsfs}
\usepackage{stfloats}
\usepackage{comment}
\usepackage{cleveref}
\usepackage{enumerate}
\usepackage{enumitem}
\usepackage{mathrsfs}
\usepackage{wrapfig}
\usepackage{bbm}
\usepackage{soul}
\usepackage{bbm}
\usepackage{colortbl}  
\usepackage{cleveref}
\usepackage{algorithm}
\usepackage{algorithmic}
\usepackage{caption}
\usepackage{wrapfig}

\newtheorem{theorem}{Theorem} 
\newtheorem{lemma}[theorem]{Lemma} 
\newtheorem{corollary}[theorem]{Corollary} 
\newtheorem{definition}{Definition}

\title{Provable Low-Frequency Bias of In-Context Learning of Representations}


\author{Yongyi Yang$^{1,2,3}$, Hidenori Tanaka$^{2,3}$, Wei Hu$^{1}$\\
$^1$ Computer Science and Engineering, University of Michigan, Ann Arbor, MI, USA\\
$^2$ CBS-NTT Physics of Intelligence Program, Harvard University, Cambridge, MA, USA\\
$^3$ Physics of Artificial Intelligence Group, NTT Research, Inc., Sunnyvale, CA, USA\\
\texttt{\{yongyi}, \texttt{vvh\}@umich.edu},  \texttt{hidenori\_tanaka@fas.harvard.edu}
}


\begin{document}

\maketitle

\begin{abstract}

In-context learning (ICL) enables large language models (LLMs) to acquire new behaviors from the input sequence alone without any parameter updates. Recent studies have shown that ICL can surpass the original meaning learned in pretraining stage through internalizing the structure the data-generating process (DGP) of the prompt into the hidden representations. However, the mechanisms by which LLMs achieve this ability is left open. In this paper, we present the first rigorous explanation of such phenomena by introducing a unified framework of double convergence, where hidden representations converge both over context and across layers. This double convergence process leads to an implicit bias towards smooth (low-frequency) representations, which we prove analytically and verify empirically. Our theory explains several open empirical observations, including why learned representations exhibit globally structured but locally distorted geometry, and why their total energy decays without vanishing. Moreover, our theory predicts that ICL has an intrinsic robustness towards high-frequency noise, which we empirically confirm. These results provide new insights into the underlying mechanisms of ICL, and a theoretical foundation to study it that hopefully extends to more general data distributions and settings.
\end{abstract}

\section{Introduction}

It has become a major challenge for today's machine learning community to understand how large language models (LLMs) perform in-context learning (ICL), i.e. the ability to learn patterns or tasks solely from input sequences without any gradient updates \citep{icl-1,icl-2,icl-3,icl-4}. Empirical studies have demonstrated that LLMs can carry out a variety of tasks, including logical reasoning \citep{icl-cot}, programming \citep{icl-programming}, and solving mathematical problems \citep{icl-math}; recent work also suggests that LLMs are able to stay robust against noisy prompts \citep{self-correct-1,self-correct-2}. However, the mechanisms underlying these capabilities remain largely elusive. A particularly striking phenomenon, recently highlighted by \citet{iclriclr}, shows that when a pre-trained LLM is fed a sequence generated by a random walk on a planar graph, with each node corresponds to a word (see ``Data Generating Process'' part of \Cref{fig:intro}), the model's hidden representations converge to a state that reflects the original graph structure, even though the graph itself was never explicitly provided. We refer to this emergent behavior as In-Context Learning of Representations (ICLR). 

The ICLR phenomenon suggests an important mechanism of ICL: the model (in-contextly) learns to embed the information of the data-generating process (DGP) into the hidden representations. Therefore, understanding the ICLR phenomenon is a crucial step towards a deeper understanding of the mechanisms of ICL. Moreover, \cite{iclriclr} also shows that an energy function decays over the course of this process, suggesting there might be an underlying principle that drives the emergence of ICLR. However, it is left open what is the nature of this principle and how does it applies to the representations. 

In this paper, we present the first theoretical explanation of the ICLR phenomenon, showing that it arises as a consequence of an intrinsic bias in LLMs towards low-frequency hidden representations. The central idea of our theoretical framework is a process we call \textbf{Double Convergence}, wherein hidden representations converge both along the context length and across layers. Specifically, the low-frequency bias emerges from the interaction of the the \textbf{Context-wise Process} and the \textbf{Layer-wise Process}.

\begin{enumerate}
    \item \textbf{Context-wise Process}:~ If the attention map ``reflects'' a function that is depends only on token identities (formally defined in \Cref{def:nice-attention} and \Cref{def:reflecting-attention}), and the representations converge to a set of \textit{latent representations} as the context length increases (formally defined in \Cref{def:good-sequence}), then the attention effectively applies the reflected function to the latent representations. This result is formally shown in \Cref{thm:context-wise-convergence};
    \item \textbf{Layer-wise Process}:~ The \textit{latent representations} across layers, and eventually converge to a state that captures the distributional properties of the input sequence. Under the specific DGP considered in \cite{iclriclr}, we prove that this convergence exactly yields the representations characteristic of the ICLR phenomenon. This result is formally stated in \Cref{thm:final}. 
\end{enumerate}

\begin{minipage}{\linewidth}
\center
\includegraphics[width=\linewidth]{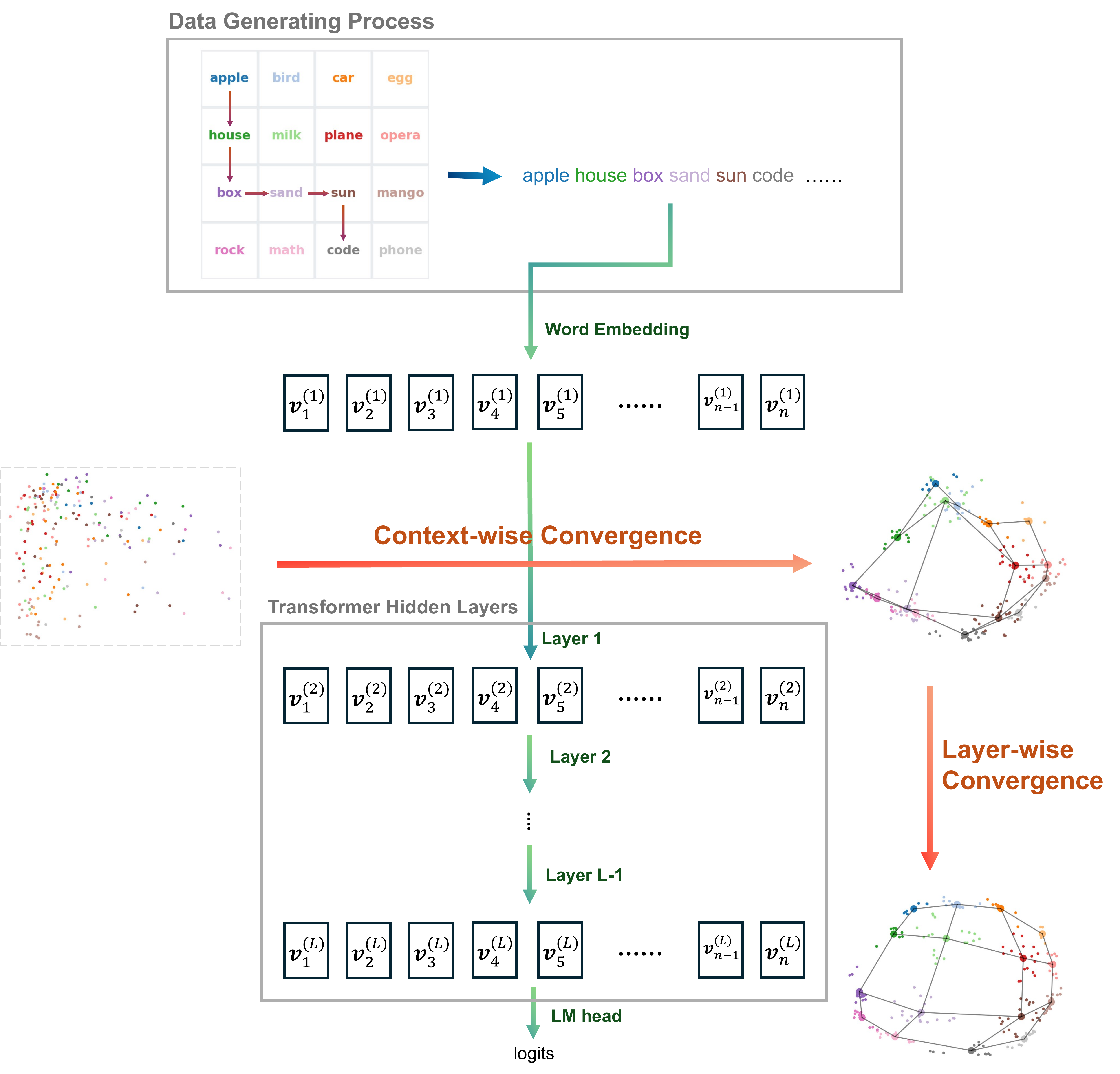}
\captionof{figure}{\textbf{Overview of the DGP and the Double Convergence Process.} The input sequence is generated by a random walk over a graph defined on the vocabulary (top), and then passed into a pre-trained Transformer model. \textbf{Context-wise convergence} occurs within each layer: token representations associated with the same word converge into tight clusters. \textbf{Layer-wise convergence} occurs across layers: the cluster centers gradually evolve towards a structure aligned with the underlying graph.}
\label{fig:intro}

\end{minipage}

The concept of Double Convergence is illustrated in \Cref{fig:intro}. While our main analysis focuses on the specific DGP used in \citet{iclriclr}, the techniques and theoretical insights we develop are able to be extended beyond this particular setting. In \Cref{sec:generalized-framework-independent-of-dgp}, we provide a generalized framework that is decoupled from any specific DGP, highlighting the broader applicability of our results.

Our theoretical framework provides a comprehensive explanation for several phenomena and open questions raised in \citet{iclriclr}. Furthermore, as both a validation and application of our theoretical results, we also demonstrate that ICL exhibits implicit robustness against high-frequency noise in the input data, which is consistent with recent empirical findings \citep{self-correct-1,self-correct-2}.

In summary, our main contributions in this paper are as follows:\begin{enumerate}
    \item We identify the double convergence process, which serves as a general framework for studying the evolution of the representation in ICL (\Cref{sec:context-wise-convergence,sec:layer-wise});
    \item We provide theoretical explanations for several previously unexplained phenomena in \citet{iclriclr}, including why ICL can suppress the original meaning of each word learned in pre-training (\Cref{sec:how-does-icl-suppress-original-meaning}), why the learned representations form an apparently regular yet slightly distorted structure (\Cref{sec:start-from-2nd-eigvec,sec:empirical-matrix-narrower}), and why the energy decreases but does not converge to zero (\Cref{sec:energy-goes-down});
    \item Our theory highlights an implicit low-frequency bias in ICL, and predicts  that LLMs are naturally robust to errors in the input prompts, which we have verified empirically (\Cref{sec:self-correction}).
\end{enumerate}

This paper is organized as follows. In \Cref{sec:backgroud}, we review prior work on the mechanisms of in-context learning and the structure of hidden representations, emphasizing the core challenges and introducing core ideas of our approach. In \Cref{sec:preliminaries}, we define notations used and the problems considered throughout the paper, as well as outline our method. In \Cref{sec:context-wise-convergence}, we present our results on context-wise convergence, and in \Cref{sec:layer-wise}, we establish the main result by combining context-wise and layer-wise convergence. In \Cref{sec:discussion}, we discuss the theoretical implications of our framework and address specific empirical observations. Finally, we conclude in \Cref{sec:conclusion} with a summary and outlook for future work. Due to space constraints, we defer the formal proofs of our theoretical results and the empirical validation of our assumptions to the appendix.

\section{Background}\label{sec:backgroud}

In this section, we briefly review the current theoretical understanding of the mechanisms underlying ICL and the structure of hidden representations in Transformers. We highlight the limitations and challenges faced by existing studies and explain how the perspective adopted in this paper offers a potential path forward.

\paragraph{Mechanisms of In-Context Learning}

Understanding the mechanisms behind ICL has become a central topic in deep learning research. However, progress has been limited by the highly complex architecture and learning dynamics of neural networks. Existing theoretical approaches can be broadly categorized into two lines of work: 1) \textit{Existential results}: This line of work constructs specific Transformer implementations that implement certain in-context algorithms, hence proving the \underline{existence} of Transformers capable of performing in-context learning \citep{icl-const-linreg,icl-const-gd2,icl-const-gd,icl-const-universal}. 2) \textit{Learning dynamics or loss landscape of simplified models}: Another line of work studies learning dynamics or the structure of the loss landscape in simplified Transformer settings, typically with a small number of layers or restricted architectures. These studies show that the models can converge to configurations that exhibit in-context learning behaviors. However, due to the complexity of Transformer training dynamics, these analyses are usually restricted to one-layer Transformers \citep{icl-dynamics-linear,icl-dynamics-onelayer}, linearized models \citep{icl-landscape-linear}, or two-layer models with controlled training setups \citep{icl-dynamics-induction}.

\paragraph{Structure of Hidden Representations}

Another related line of theoretical researches investigates how hidden representations evolve across layers during inference (this topic is sometimes referred to as ``inference dynamics'') \citep{energy-hopfield,transformers-from-optimization,white-box-transformers,mathematical-perspective,energy-analysis,energy-hyperspherical}. Despite their insights, these studies also typically require simplifying or modifying the Transformer architecture due to the non-linear and heterogeneous nature of Transformers. \citet{transformers-from-optimization} outlined four major challenges in analyzing inference dynamics, many of which remain unsolved.

\paragraph{Structure of Attention Maps} 

\citet{induction-head}, identified a specific attention mechanism known as \textit{induction heads}. Generally speaking, they are attention heads that implement a form of token copying: they identify a previous occurrence of the current token and attend to the token that follows it. Formally, let the input tokens be $\{x_k\}_{k=1}^n$, generated by a \textit{Markov process}, and the attention layer be defined as ${\boldsymbol{u}}_k = \sum_{j=1}^k a_{k,j} {\boldsymbol{v}}_k$, where $\{{\boldsymbol{v}}_k\}_{k=1}^n$ and $\left\{{\boldsymbol{u}}_k\right\}_{k=1}^n$ are input and output representations respectively, and $\{a_{k,j}\}_{k,j \in [n]}$ denotes the attention weights, then the induction heads can be defined as attention maps satisfying the following condition: \begin{align}
a_{k,j} > 0 \implies x_j \in \mathcal N(x_k),\label{eq:induction-head}
\end{align}
where $\mathcal N(x)$ is the set of all possible next tokens of $x$.  

A major challenge in existing methods when analyzing inference dynamics arises from the interactive and heterogeneous structure of Transformer models: the attention map depends on the hidden representations, which in turn evolve through both self-attention and feedforward layers. This bidirectional dependency makes theoretical analysis extremely difficult, as noted by \citet{transformers-from-optimization}. In this paper, to overcome this difficulty, we adopt a more pragmatic approach: Rather than attempting to derive the structure of attention maps from first principles, we posit a structured form for the attention map, that is strong enough to enable meaningful theoretical results, yet general enough to be empirically validated and extendable to broader settings. Our goal is not to explain why attention maps take this form, but to demonstrate that, if they do, they can give rise to the observed structure in hidden representations, and empirically verify the validity of these assumptions in practice.

\section{Preliminaries}\label{sec:preliminaries}

Throughout this paper, we use bold upper-case letters to represent matrices (e.g. ${\boldsymbol{A}}$), bold lower-case letters to represent vectors (e.g. ${\boldsymbol{x}}$) and calligraphic upper-case letters to represent sequences of vectors (e.g. ${\mathcal{V}} = \{{\boldsymbol{v}}_k\}_{k=1}^n$). For a matrix or a vector, we use plain lower-case letters to represent their entries (e.g. $a_{i,j}$ represents the $i,j$-th entry of ${\boldsymbol{A}}$). For a number $n \in \mathbb N$, we denote $\{1,2,\cdots, n\}$ by $[n]$. For a set $S$, we use $\mathbbm 2^S$ to represent its power set. We use ${\boldsymbol{1}}$ to represent a vector with all-entries being $1$, whose dimensionality is inferred from context. For a logical statement $\phi$, we define $\mathbbm 1_{\{\phi\}} = \begin{cases}1 & \phi \text{ is true} \\ 0 & \phi \text{ is false}\end{cases}$ to be its indicator function. Given a sequence of vectors ${\mathcal {Z}} = \left\{{\boldsymbol{z}}_x\right\}_{x =1}^c \in \left(\mathbb R^d\right)^c$, we define ${\boldsymbol{Z}} = \mathop{ \mathrm{ mat } } \mathcal Z \in \mathbb R^{d \times c}$ to be the matrix formed by column-wise stacking of the vectors in $\mathcal Z$, i.e. the $i$-th column of ${\boldsymbol{Z}}$ is ${\boldsymbol{z}}_i$. Given a matrix $ {\boldsymbol{A}}\in \mathbb R^{n\times n}$ and a sequence of vectors $\mathcal V = \left\{{\boldsymbol{v}}_k\right\}_{k=1}^n \in \left(\mathbb R^d\right)^n$, let $\boldsymbol A \mathcal V$ to be a sequence defined as ${\boldsymbol{A}}{\mathcal{V}} = \left\{\sum_{j=1}^n a_{k,j} {\boldsymbol{v}}_j\right\}_{k=1}^n$. For a vector function $\sigma: \mathbb R^d \to \mathbb R^d$ and a matrix ${\boldsymbol{Z}} \in \mathbb R^{c\times d}$, we use $\sigma({\boldsymbol{Z}})$ to represent applying $\sigma$ column-wisely to ${\boldsymbol{Z}}$.

\subsection{Data Generation}

Throughout this paper, we assume the input is a sequence of tokens $\mathcal X = \{x_k\}_{k=1}^n \in [c]^n$, where $n$ is the sequence length and $c$ is the vocabulary size (each token is simply a number in $[c]$). We assume that the sequence is sufficiently long, i.e., $n > 10c$.

Let $\mathcal G = ([c], \mathcal E)$ be a connected undirected graph defined over the vocabulary (with each node being a word). Let $\widetilde {\boldsymbol{W}} = \left\{\widetilde w_{x,y} \right\}_{x,y \in [c]}$ denote the adjacency matrix of $\mathcal G$, and let ${\boldsymbol{\pi}} = \{\pi_x\}_{x\in [c]} \in \mathbb R^c$ be the stationary distribution $\mathcal G$ (i.e. ${\boldsymbol{\pi}}$ is the $L_1$ normalized Perron vector of $\widetilde {\boldsymbol{W}}$). 

We define the \textbf{reweighted adjacency} matrix ${\boldsymbol{W}} = \left\{w_{x,y}\right\}_{x,y \in [c]}$ as $
w_{x,y} = \widetilde w_{x,y} \pi_x\pi_y$. Note that ${\boldsymbol{W}}$ is also a non-negative symmetric matrix, and can therefore be viewed as the adjacency matrix of a reweighted version of $\mathcal G$.
Define the (reweighted) degree vector ${\boldsymbol{d}} = \{d_x\}_{x \in [c]} \in \mathbb R^c$ as $d_x = \sum_{y\in [c]} w_{x,y}$. Let ${\boldsymbol{D}} = \mathop{ \mathrm{ diag } }({\boldsymbol{d}})$ be the degree matrix of ${\boldsymbol{W}}$. 

We assume the input sequence $\mathcal X$ satisfies the following data-generating process: the first $c$ tokens are fixed as a traversal of the vocabulary (i.e. $x_k = k$ for $k \in [c]$), and starting from $x_{c+1}$, the remaining tokens are generated by a random walk on the graph $\mathcal G$, with the initial token $x_{c+1}$ being sampled from the stationary distribution ${\boldsymbol{\pi}}$ (i.e. the probability of $x_{c+1} = y$ is $\pi_y$)\footnote{This DGP is essentially the same as \cite{iclriclr}. We fix the first $c$ tokens only to avoid trivial but complicated edge cases in the analysis, and since we study the asymptotic behavior of the model, the effect of the first a few tokens is negligible.}.

Finally, for any word $x \in [c]$ and position $k \in [n]$, let $F_{x,k}$ be the frequency of token $x$ in the first $k$ elements of the sequence, i.e. $F_{x,k} = \sum_{j=1}^k \mathbbm 1_{\{x_j = x\}}$.

\subsection{Model Architecture}

Throughout the paper, we consider a simplified yet deep and non-linear Transformer model, as described in \Cref{alg:llm}, where $d \in \mathbb N$ is the hidden and input dimension, $L \in \mathbb N$ is the number of layers, ${\boldsymbol{A}} ^{(\ell)}= \left\{a_{k,j}^{(\ell)}\right\}_{k,j \in [n]}$ is the (single-head) attention map at layer $\ell$, and $\sigma^{(\ell)}: \mathbb R^d \to \mathbb R^d$ is the neuron-wise transformations at layer $\ell$ (which may include feedforward networks (FFNs), normalization layers, and other non-linearities). 

As discussed in \Cref{sec:backgroud}, the most critical simplification in our model is that we treat the attention maps as given, instead of generated from hidden representations. In exchange for this simplification, we are able to explicitly characterize the structure of the hidden representations in a deep and non-linear model, enabling us to rigorously explain multiple in-context learning behaviors. This contrasts with prior works that remain entangled in the complexity of layer-wise interactions in full Transformer architectures. Moreover, we validate our assumption empirically on real models in \Cref{sec:empirical}, finding that our assumptions on self-attention effectively  explains more than $70\%$ of actual attention connections, indicating its practical justifiability.

This model described in \Cref{alg:llm} also omits several other standard components such as normalizations and residual connections. We note that this is to not over-complicating the theoretical results while still capturing the core mechanisms and challenges of the model.  There is flexibility in our analysis to include other components, but we choose to focus on a minimal version in the main paper to clearly highlight the key ideas behind our theoretical results. See \Cref{sec:integrating-other-components} for further discussion on integrating additional components. 

\begin{wrapfigure}[17]{R}{0.5\linewidth}

\begin{minipage}{0.48\textwidth}
\begin{algorithm}[H]
\caption{Transformer forward process}\label{alg:llm}
\begin{algorithmic}
\STATE{\textbf{input}: $\left\{{\boldsymbol{v}}_k ^{(1)}\right\}_{k=1}^n \in \left(\mathbb R^d\right)^n$ as ${\boldsymbol{v}}_k^{(1)} = {\boldsymbol{b}}_{x_k}$.}
\FOR{$\ell = 1,2 \cdots L-1$}
    \FOR{$k = 1,2 \cdots n$}
        \STATE{$\displaystyle {\boldsymbol{u}}_k^{(\ell)} = \sum_{j=1}^k a_{k,j}^{(\ell)} {\boldsymbol{v}}_j^{(\ell)}$;}
        \STATE{$\displaystyle {\boldsymbol{v}}_k^{(\ell + 1)} = \sigma^{(\ell)} \left({\boldsymbol{u}}_k^{(\ell)}\right)$}.
    \ENDFOR 
\ENDFOR
\STATE{\textbf{output}: $\left\{{\boldsymbol{v}}_k^{(L)}\right\}_{k=1}^n$.}
\end{algorithmic}
\end{algorithm}
\end{minipage}
\end{wrapfigure}

\subsection{Methodology Outline}

As noted before, we treat the attention maps in the Transformer as given, rather than dynamically generated from the hidden representations. Specifically, we assume the attention maps in the model are composed of a class of structured maps we refer to as \textbf{nice attentions}, formally defined in \Cref{def:reflecting-attention}. These are attention maps whose connectivity patterns are determined by a function of the input tokens. This concept can be viewed as a generalization of the notion of induction heads (see \cref{eq:induction-head}), as illustrated by the structural similarity between \cref{eq:induction-head} and \cref{eq:reflecting-attention}.

Given this assumption, we are able to prove that the there is a \textbf{double convergence} process in the inference dynamics: 1) within each layer, the hidden representations converge to a set of ``latent representations'' that only encodes token identity and does not have position information; 2) then, each layer of nice attentions operates as a transformation on these latent representations. Consequently, the latent representations evolve and converge progressively across layers.

Furthermore, as the hidden representations can be viewed as having two axes, which we call the neuron axis and token axis, and the double convergence happens in the token axis, it can be shown that any transformations in the neuron axis, as long as being well conditioned, will not affect this double convergence process. This observation allows us to ``insert'' any neuron-wise transformations (such as such as FFNs and normalizations) between the self-attention layers without affecting the double convergence behavior, enabling a modular and robust theoretical framework.

\section{Context-wise Convergence}\label{sec:context-wise-convergence}

We begin by establishing a general result: if the attention map reflects a specific function of the underlying tokens, and the representations converge to a set of latent representations, then the output of the attention layer also converges, towards a transformed set of latent representations. We start the presentation of this result by defining latent representations and nice attention maps.

\begin{definition}\label{def:good-sequence}
    For a sequence of $d$-dimensional vectors $\mathcal U = \left\{{\boldsymbol{u}}_k\right\}_{k=1}^n \in \left(\mathbb R^d\right)^n$, if there exists a number $\gamma > 0$ and another sequence $\mathcal Z = \left\{{\boldsymbol{z}}_x\right\}_{x \in [c]}$, such that \begin{align}
    \forall k \in [n], \left\|{\boldsymbol{u}}_k - {\boldsymbol{z}}_{x_k}\right\| \leq \frac{\gamma }{\sqrt{k}},
    \end{align}
    then we say $\mathcal U$ is a \textbf{good sequence} converging to $\mathcal Z$ with parameter $\gamma$, and $\mathcal Z$ is the \textbf{latent representation} of $\mathcal U$.
\end{definition}

The concept of latent representations captures the idea that the hidden representation for each token converges to a vector determined solely by the token identity and independent of position.

\begin{definition}\label{def:nice-attention}
    If a matrix ${\boldsymbol{A}} = \left\{a_{k,j}\right\}_{k,j \in [n]} \in \mathbb R_+^{n\times n}$ is lower-triangular and row-stochastic, i.e. it satisfies $a_{k,j} = 0$ for all $j > k$, and $\sum_{j=1}^k a_{k,j} = 1$ for all $k \in [n]$, then we say ${\boldsymbol{A}}$ is an \textbf{attention map}. Moreover, for an attention map ${\boldsymbol{A}}$, if there exists a scalar $\psi > 0$, such that \begin{align}\forall k \in [n], j \in [k], \sum_{i=1}^j a_{k,i} \leq \frac{\psi j}{k},\end{align}
    then we say ${\boldsymbol{A}}$ is a \textbf{nice attention map} with parameter $\psi$.
\end{definition}

Nice attention maps are attention maps with a soft uniformity and locality: they prevent the attention from overly concentrating disproportionately on early tokens.

\begin{definition}\label{def:reflecting-attention}
    If ${\boldsymbol{A}} \in \mathbb R^{n\times n}$ is a\ nice attention map with parameter $\psi$, and there is a function $f: [c] \to \mathbbm 2^{[c]}$, such that for all $k > c$ and $j \in [k]$, \begin{align}
    a_{k,j} > 0 \implies x_j \in f(x_k), \label{eq:reflecting-attention}
    \end{align}
    and moreover,\begin{align}\forall k \in[n],\forall y \in f(x), \left|\sum_{j \in [k]}a_{k,j} \mathbbm 1_{\{x_j=y\}} - \frac{F_{y,k}}{\sum_{y' \in f(x)} F_{y',k}}\right| \leq \frac{\psi}{\sqrt{k}}\label{eq:condition-balanced-attention-weights}\end{align}

    then we say ${\boldsymbol{A}}$ \textbf{reflects} the function $f$.
\end{definition}

Intuitively, nice attentions that reflects a function matches a functionally defined neighborhood of the current token, which can be viewed as a generalized notion of induction heads. Moreover, a  nice attention that reflects a function also requires attention weights to distribute roughly proportionally among all words.

Notice that in \Cref{def:reflecting-attention}, the $k$-th row of the attention map $\{a_{k,j}\}_{j=1}^n$ is only well defined only there exists $j \in [k]$ such that $x_j \in f(x_k)$ (otherwise all attention weights in this row are $0$, violating the assumption that each row of ${\boldsymbol{A}}$ sums up to $1$). This is guaranteed under our setup because we have explicitly set the first $c$ tokens to be a traversal over the entire vocabulary.

We are now ready to state the main theorem of this section. Notice that this theorem stated here depends on the DGP, as it relies on the distribution of $F_{x,k}$. However, it is possible to prove a weaker but more general version of this theorem that is independent of the DGP. See \Cref{sec:generalized-framework-independent-of-dgp} for more details.

\begin{theorem}\label{thm:context-wise-convergence}
    There exists a constant $C>0$ that only depends on $\mathcal G$, and an event with probability at least $0.999$, such that within this event, the following statement holds. Suppose ${\mathcal{V}} = \left\{{\boldsymbol{v}}_k\right\}_{k=1}^n \in \left(\mathbb R^d\right)^n$ is a good sequence converging to ${\mathcal {Z}} = \left\{{\boldsymbol{z}}_x\right\}_{x \in [c]}$ with parameter $\gamma$, and ${\boldsymbol{A}} = \left\{a_{k,j}\right\}_{k,j \in [n]} \in \mathbb R^{n\times n}$ is a nice attention map with parameter $\psi$ that reflects a function $f: [c] \to \mathbbm 2^{[c]}$. Then ${\boldsymbol{A}}{\mathcal {V}}$ is a good sequence converging to \begin{align}
    \mathcal Z' = \left\{ \frac{ \sum_{y \in f(x)} \pi_y{\boldsymbol{z}}_{y}}{\sum_{y \in f(x)} \pi_y }\right\}_{x \in [c]}
    \end{align} with parameter $C\psi(\gamma + N) + C\log n$, where $N = \max_{y \in [c]}\|{\boldsymbol{z}}_y\|$.
\end{theorem}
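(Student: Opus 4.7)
The plan is to bound $\|({\boldsymbol{A}}{\mathcal{V}})_k - {\boldsymbol{z}}'_{x_k}\|$ for each $k\in[n]$ via the decomposition
\begin{equation*}
({\boldsymbol{A}}{\mathcal{V}})_k - {\boldsymbol{z}}'_{x_k} \;=\; \underbrace{\sum_{j=1}^k a_{k,j}\bigl({\boldsymbol{v}}_j - {\boldsymbol{z}}_{x_j}\bigr)}_{\text{(I)}} \;+\; \underbrace{\Bigl(\sum_{j=1}^k a_{k,j}\,{\boldsymbol{z}}_{x_j} - {\boldsymbol{z}}'_{x_k}\Bigr)}_{\text{(II)}}.
\end{equation*}
Term (I) isolates the fluctuation from ${\mathcal{V}}$ not yet being exactly latent; term (II) measures how well the attention weights, grouped by token identity, approximate the stationary-weighted mixture defining ${\boldsymbol{z}}'_{x_k}$.

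For (I), the good-sequence hypothesis gives $\|{\boldsymbol{v}}_j - {\boldsymbol{z}}_{x_j}\| \le \gamma/\sqrt{j}$, so it suffices to bound $\sum_{j=1}^k a_{k,j}/\sqrt{j}$. Abel summation against the nice-attention inequality $\sum_{i\le j} a_{k,i} \le \psi j/k$ yields $\sum_{j=1}^k a_{k,j}/\sqrt{j} = O((1+\psi)/\sqrt{k})$, so $\|\text{(I)}\| = O(\psi \gamma / \sqrt{k})$.

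For (II), group the attention mass by token identity: let $p_y := \sum_{j : x_j = y} a_{k,j}$ and $q_y := \pi_y / \sum_{y' \in f(x_k)} \pi_{y'}$, extended to zero outside $f(x_k)$. By the reflecting property, $p_y = 0$ for $y \notin f(x_k)$, and both $(p_y)$ and $(q_y)$ are probability vectors supported on $f(x_k)$, so $\|\text{(II)}\| \le N\sum_y |p_y - q_y|$. I would interpose the empirical ratio $\hat q_y := F_{y,k}/\sum_{y' \in f(x_k)} F_{y',k}$: condition \eqref{eq:condition-balanced-attention-weights} gives $|p_y - \hat q_y| \le \psi/\sqrt{k}$ directly, while $|\hat q_y - q_y|$ reduces via elementary algebra (using that $\min_y \pi_y$ depends only on $\mathcal G$) to the deviation $|F_{y,k}/k - \pi_y|$ of empirical token frequencies from stationarity.

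The main obstacle is making this frequency concentration uniform in $y$ and $k$ with probability at least $0.999$. Since $\mathcal G$ is connected and undirected, the induced random walk is ergodic with a spectral gap depending only on $\mathcal G$, so a Bernstein-type inequality for Markov chains (or a regeneration/coupling argument) gives $\Pr[\,|F_{y,k} - k\pi_y| > t\sqrt{k}\,] \le 2\exp(-\Omega(t^2))$. Taking $t = \Theta(\log n)$ and union-bounding over $y \in [c]$ and $k \in \{c,\ldots,n\}$ carves out the desired event on which $|F_{y,k}/k - \pi_y| \le O(\log n/\sqrt{k})$ simultaneously. Combining this uniform concentration with the bound on (II) gives $\|\text{(II)}\| \le O((\psi + \log n)\,N/\sqrt{k})$, and adding (I) produces the claimed rate $\bigl(C\psi(\gamma+N) + C\log n\bigr)/\sqrt{k}$, with $\mathcal G$-dependent quantities ($c$, $\min_y \pi_y$, the spectral gap) absorbed into the constant $C$.
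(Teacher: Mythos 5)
Your proposal is correct and follows essentially the same route as the paper's proof: the same Abel-summation bound on $\sum_j a_{k,j}/\sqrt{j}$ against the nice-attention condition, the same interposition of the empirical frequency ratio $F_{y,k}/\sum_{y'\in f(x_k)}F_{y',k}$ between the attention mass and the stationary mixture, and the same uniform Hoeffding-type concentration for the Markov chain to control $|F_{y,k}/k-\pi_y|$ on a $0.999$-probability event with a $\log n/\sqrt{k}$ rate. The only differences are cosmetic (the paper handles $k\le c$ and $f(x_k)=\emptyset$ as explicit trivial edge cases and regroups your term (II) slightly), so no gaps to report.
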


\Cref{thm:context-wise-convergence} shows that applying a nice attention map that reflects a function to a good sequence yields another good sequence, and that the attention operation effectively acts on the latent representations. In other words, attention maps of this kind preserve convergence and transform latent representations in a token-consistent way.

\section{Layer-wise Convergence}\label{sec:layer-wise}

In \Cref{thm:context-wise-convergence}, we showed how latent representations evolve under a single attention layer. In this section, we study how these latent representations change across layers.

\paragraph{Attention Maps.}
To analyze layer-wise convergence, we must have a more specific assumption on what exactly are the functions that the attention maps reflect. Specifically, we assume each attention map is a weighted combination of four basic types of maps: ${\boldsymbol{A}}^{(\ell, A)}$, ${\boldsymbol{A}}^{(\ell,B)}$, ${\boldsymbol{A}}^{(\ell,O)}$ and ${\boldsymbol{A}}^{(T)}$, that satisfies the following assumptions respectively:
\begin{enumerate}
    \item A-type (self connections): ${\boldsymbol{A}}^{(\ell, A)}$ is a nice attention map with parameter $\psi_A^{(\ell)}$ that reflects $f_A: x \mapsto \{x\}$;
    \item B-type (neighbor connections): ${\boldsymbol{A}}^{(\ell, B)}$ is a nice attention map with parameter $\psi_B^{(\ell)}$ that reflects $f_B: x \mapsto \{y\in [c]| \widetilde w_{x,y} > 0\}$;
    \item O-type (other connections): ${\boldsymbol{A}}^{(\ell,O)}$ is a nice attention map with parameter $\psi_O^{(\ell)}$ that reflects $f_O: x \mapsto [c]$;
    \item T-type (trivial connections, i.e. attention sinks): ${\boldsymbol{A}}^{(T)}$ satisfies $a_{i,j} ^{(T)} = 1$ only when $j = 1$.
\end{enumerate}

We assume the attention map at layer $\ell$, i.e. ${\boldsymbol{A}}^{(\ell)}$, takes the form \begin{align}
{\boldsymbol{A}}^{(\ell)} = \rho_A^{(\ell)} {\boldsymbol{A}}^{(\ell,A)} + \rho_B^{(\ell)} {\boldsymbol{A}}^{(\ell,B)} + \rho_O^{(\ell)}{\boldsymbol{A}}^{(\ell,O)} + \rho_T^{(\ell)} {\boldsymbol{A}}^{(T)}, \label{eq:attention-map-def}
\end{align}
where $\rho_\tau^{(\ell)} \geq 0$ ($\tau \in \{{A},{B},{O},{T}\}$) is the weight of the $\tau$-th type connections, and $\sum_{\tau \in \{{A},{B},{O},{T}\}} \rho^{(\ell)}_\tau = 1$. Empirically, we find that these four types explain over $70\%$ of attention connections in real models (see \Cref{sec:empirical} for details), highlighting the empirical soundness of this classification.

\paragraph{The Role of FFN.}
To enable meaningful layer-wise convergence results with neuron-wise transformations involved, we also need assumptions on the nonlinearity $\sigma^{(\ell)}$ applied at each layer. We introduce the following concept.
\begin{definition}
    For a set $\mathscr Z \subseteq \mathbb R^{d\times c}$, an orthonormal matrix ${\boldsymbol{U}} \in \mathbb R^{d\times p}$ and scalars $\gamma_1,\gamma_2 > 0$, if a function $\sigma: \mathbb R^d\to \mathbb R^d$ satisfies that for any matrix ${\boldsymbol{Z}} \in \mathscr Z$,  \begin{align}
    \gamma_1 \left\| {\boldsymbol{Z}}{\boldsymbol{D}}^{1/2} {\boldsymbol{U}} \right\| \leq \left\| \sigma({\boldsymbol{Z}}){\boldsymbol{D}}^{1/2} {\boldsymbol{U}} \right\| \leq\gamma_2 \left\| {\boldsymbol{Z}}{\boldsymbol{D}}^{1/2} {\boldsymbol{U}} \right\|,
    \end{align}
    then we say $\sigma$ is a \textbf{great mapping} w.r.t. $(\gamma_1,\gamma_2, \mathscr Z,{\boldsymbol{U}})$.
\end{definition}

Intuitively, a great mapping is well-behaved (e.g. smooth) along a given subspace at the point of the latent representations. This allows us to insert FFNs or other neuron-wise transformations without disrupting convergence.

\subsection{Main Results}

Before stating the main results, we first define the concept of the spectral gap of a matrix. 
\begin{definition}
    For a symmetric matrix ${\boldsymbol{M}} \in \mathbb R^{c\times c}$ and index $q \in [c]$, let its eigenvalues be $\{\lambda_k\}_{k=1}^c$, arranging in a non-decreasing order of absolute values, define $\delta_q({\boldsymbol{M}})  = \left| \frac{\lambda_q}{\lambda_{q+1}}\right|$ be the \textbf{spectral gap} of ${\boldsymbol{M}}$.
\end{definition}

With the above assumptions and definitions, we are ready to present our main theorem. 

\begin{theorem}\label{thm:final} There exists an event with probability at least $0.99$ such that the following statement holds. Let $n_{[x]}$ be the largest $k \in [n]$ such that $x_k = x$. Let ${\boldsymbol{M}} =  {\boldsymbol{D}}^{-1/2} {\boldsymbol{W}}{\boldsymbol{D}}^{-1/2}$. Let the eigen-decomposition of ${\boldsymbol{M}}$ be 
\begin{align}
{\boldsymbol{M}} = \begin{bmatrix}{\boldsymbol{f}}_1 &  {\boldsymbol{X}} & {\boldsymbol{Y}}\end{bmatrix} \begin{bmatrix}\lambda_1 && \\ & {\boldsymbol{\Lambda}} & \\ && {\boldsymbol{\Lambda}}'\end{bmatrix} \begin{bmatrix}{\boldsymbol{f}}_1^\top \\ {\boldsymbol{X}}^\top \\ {\boldsymbol{Y}}^\top,\end{bmatrix}
\end{align}where the eigenvalues are arranged in a non-increasing order of absolute values; ${\boldsymbol{\Lambda}}$ contains $q-1$ eigenvalues and ${\boldsymbol{\Lambda}}'$ contains $n-q$ eigenvalues. Let $\boldsymbol{A}^{(\ell)}$ be defined as in \cref{eq:attention-map-def}, and $\left\{{\boldsymbol{v}}_k^{(\ell)}\right\}_{k=1}^n$ be defined as in \Cref{alg:llm}. Suppose each $\sigma^{(\ell)}$ is a great mapping w.r.t. $\left(\gamma_1^{(\ell)}, \gamma_2^{(\ell)}, \mathbb R^{d}, \boldsymbol{X}\right)$ and $\left(\gamma_1'^{(\ell)}, \gamma_2'^{(\ell)}, \mathbb R^{d}, \boldsymbol{Y}\right)$, and have finite Lipschitz constant. Suppose there exists $\epsilon > 0$ satisfying that $\delta_q\left(\rho_A^{(\ell)} {\boldsymbol{I}} + \rho_B^{(\ell)} {{\boldsymbol{M}}}\right)\frac{\gamma_1^{(\ell)}}{\gamma_2'^{(\ell)}} \leq 1-\epsilon$ for all $\ell \in [L]$. Then 
\begin{align}
\lim_{L \to \infty} \lim_{n \to \infty}\frac{\left\| {\boldsymbol{V}}_{n}^{(L)} {\boldsymbol{D}}^{1/2}{\boldsymbol{Y}}\right\|}{\left\| {\boldsymbol{V}}_{n}^{(L)} {\boldsymbol{D}}^{1/2}{\boldsymbol{X}}\right\|} = 0,
\end{align}where ${\boldsymbol{V}}_n^{(L)} = \mathop{ \mathrm{ mat } }\left\{  {\boldsymbol{v}}_{n_{[x]}}^{(L)}\right\}_{x \in [c]}$.
\end{theorem}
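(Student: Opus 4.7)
The plan is to execute the double-convergence programme: first take $n\to\infty$ so that \Cref{thm:context-wise-convergence} applies cleanly at every layer, then track the evolution of the latent representations across layers. By induction on $\ell$, I would show that $\left\{{\boldsymbol{v}}_k^{(\ell)}\right\}_{k=1}^n$ is a good sequence converging to some ${\boldsymbol{Z}}^{(\ell)}\in\mathbb{R}^{d\times c}$, with base case ${\boldsymbol{Z}}^{(1)}=\{{\boldsymbol{b}}_x\}_{x\in[c]}$ immediate from ${\boldsymbol{v}}_k^{(1)}={\boldsymbol{b}}_{x_k}$. For the inductive step, linearity of the matrix action together with \cref{eq:attention-map-def} lets me apply \Cref{thm:context-wise-convergence} separately to each of the $A$-, $B$-, and $O$-type components (each being a nice reflecting attention); the $T$-type component contributes the sequence of constant copies of ${\boldsymbol{v}}_1^{(\ell)}$, which is trivially good. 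A positive-weighted sum of good sequences remains good, and the column-wise Lipschitz $\sigma^{(\ell)}$ then produces ${\boldsymbol{Z}}^{(\ell+1)}$.

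Next I would compute the action of the combined attention on the re-normalised matrix ${\boldsymbol{Z}}^{(\ell)}{\boldsymbol{D}}^{1/2}$ using the limiting formula from \Cref{thm:context-wise-convergence}. The $A$-type acts as the identity. The $B$-type, via the substitutions $w_{x,y}=\widetilde{w}_{x,y}\pi_x\pi_y$ and $d_x=\sum_y w_{x,y}$, acts as right-multiplication by ${\boldsymbol{M}}={\boldsymbol{D}}^{-1/2}{\boldsymbol{W}}{\boldsymbol{D}}^{-1/2}$. The $O$-type sends every column of ${\boldsymbol{Z}}^{(\ell)}$ to the same average $\bar{\boldsymbol{z}}=\sum_y\pi_y{\boldsymbol{z}}_y^{(\ell)}/\sum_y\pi_y$, which in renormalised coordinates is rank-one and supported on $\mathrm{span}({\boldsymbol{D}}^{1/2}\mathbf{1})=\mathrm{span}({\boldsymbol{f}}_1)$; the $T$-type contribution is likewise rank-one and supported on ${\boldsymbol{f}}_1$. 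Since ${\boldsymbol{X}}$ and ${\boldsymbol{Y}}$ are orthogonal complements of ${\boldsymbol{f}}_1$, right-multiplication by either annihilates the $O$- and $T$-type pieces, and the pre-$\sigma$ update restricted to these subspaces reduces cleanly to right-multiplication by $\rho_A^{(\ell)}{\boldsymbol{I}}+\rho_B^{(\ell)}{\boldsymbol{\Lambda}}$ (for ${\boldsymbol{X}}$) and $\rho_A^{(\ell)}{\boldsymbol{I}}+\rho_B^{(\ell)}{\boldsymbol{\Lambda}}'$ (for ${\boldsymbol{Y}}$).

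The core of the argument is a per-layer contraction bound on the ratio $r_\ell:=\|{\boldsymbol{Z}}^{(\ell)}{\boldsymbol{D}}^{1/2}{\boldsymbol{Y}}\|/\|{\boldsymbol{Z}}^{(\ell)}{\boldsymbol{D}}^{1/2}{\boldsymbol{X}}\|$. The linear step scales $\|{\boldsymbol{Z}}^{(\ell)}{\boldsymbol{D}}^{1/2}{\boldsymbol{Y}}\|$ by at most the largest absolute eigenvalue of $\rho_A^{(\ell)}{\boldsymbol{I}}+\rho_B^{(\ell)}{\boldsymbol{\Lambda}}'$ and $\|{\boldsymbol{Z}}^{(\ell)}{\boldsymbol{D}}^{1/2}{\boldsymbol{X}}\|$ by at least the smallest absolute eigenvalue of $\rho_A^{(\ell)}{\boldsymbol{I}}+\rho_B^{(\ell)}{\boldsymbol{\Lambda}}$; the ratio of these is precisely $\delta_q(\rho_A^{(\ell)}{\boldsymbol{I}}+\rho_B^{(\ell)}{\boldsymbol{M}})$. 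The two great-mapping assumptions then contribute the factor $\gamma_1^{(\ell)}/\gamma_2'^{(\ell)}$, and the hypothesis $\delta_q\cdot\gamma_1^{(\ell)}/\gamma_2'^{(\ell)}\le1-\epsilon$ gives $r_{\ell+1}\le(1-\epsilon)r_\ell$. Iterating yields $r_L\le(1-\epsilon)^{L-1}r_1\to0$; combined with ${\boldsymbol{v}}_{n_{[x]}}^{(L)}\to{\boldsymbol{z}}_x^{(L)}$ as $n\to\infty$ by the good-sequence property, this gives the claim.

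The main obstacle is reconciling the \emph{column-wise} nonlinearity $\sigma^{(\ell)}$ with the fact that ${\boldsymbol{X}}$ and ${\boldsymbol{Y}}$ are subspaces that \emph{mix columns}: without additional structure, a per-column nonlinearity can arbitrarily distort a global subspace projection, so a priori there is no reason the low- and high-frequency projections should evolve independently through $\sigma^{(\ell)}$. The great-mapping assumption is the essential tool that bypasses this difficulty by decree, supplying two-sided norm bounds exactly on the projections of interest. A secondary concern is that the error bound $C\psi(\gamma+N)+C\log n$ in \Cref{thm:context-wise-convergence} could accumulate across layers, so one must take $n\to\infty$ before $L\to\infty$: this order makes each good-sequence convergence exact at each fixed layer, letting the iterative application of \Cref{thm:context-wise-convergence} proceed without error accumulation.
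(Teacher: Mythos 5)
Your proposal is correct and follows essentially the same route as the paper's proof: an induction using \Cref{thm:context-wise-convergence} on each of the A/B/O components plus the trivial T component to propagate the good-sequence property (the paper's \Cref{lem:evolution-with-time-with-attn,lem:convergence-combine}), the observation that the O- and T-type contributions lie in $\mathrm{span}({\boldsymbol{D}}^{1/2}{\boldsymbol{1}})$ and are annihilated by projection onto ${\boldsymbol{X}}$ and ${\boldsymbol{Y}}$ (\Cref{lem:evolution-of-projection}), the per-layer contraction of the ratio by $\delta_q\cdot\gamma_1^{(\ell)}/\gamma_2'^{(\ell)}\le 1-\epsilon$ (\Cref{cor:spectral-gap-lemma,cor:layer-wise-evo-with-ffn}), and the same order of limits. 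The only detail you leave implicit is the high-probability event that every token's last occurrence $n_{[x]}$ exceeds $n/2$ (the paper's \Cref{cor:showup-in-last-half}), which is needed so that the $\gamma^{(L)}/\sqrt{n_{[x]}}$ error actually vanishes as $n\to\infty$.
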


\section{Discussion About the Theoretical Results}\label{sec:discussion}

In \Cref{thm:final}, we proved that the representations converge to the top eigenspace of ${\boldsymbol{M}} = {\boldsymbol{D}}^{-1/2} {\boldsymbol{W}} {\boldsymbol{D}}^{-1/2}$. This matrix has been extensively studied in the literature on graph learning and spectral methods \citep{gnn,gcn2,sgnn,twirls,sagt,expansions}. Specifically, since ${\boldsymbol{W}}$ is a symmetric matrix with non-negative entries, it defines a (weighted) undirected  graph. Let $\widehat {\boldsymbol{L}}$ be the symmetrically normalized Laplacian matrix of this graph \citep{sagt}, it holds that ${\boldsymbol{M}} =  {\boldsymbol{I}}- \widehat {\boldsymbol{L}}$, i.e. ${\boldsymbol{M}}$ and $\widehat {\boldsymbol{L}}$ share the same set of eigenvectors, with the order of eigenvalues being reversed. Thus, \textbf{top eigenvectors} of ${\boldsymbol{M}}$ corresponds to \textbf{low eigenvalues} of $\widehat {\boldsymbol{L}}$, which is known to encode the low-frequency (smooth) and low-energy signals on the graph, as they tends to assign similar values to adjacent nodes. These low eigenvectors of $\widehat {\boldsymbol{L}}$ are often used as coordinates for graph visualization, as it is know that they form figures that match human intuition \citep{graph-drawing}\footnote{Likely because humans also have a low-frequency bias in visual processing.}, and exactly explains why the ICLR phenomenon, where the hidden representations encode global graph structure, emerges in such settings.

We confirm this theoretical prediction by reproducing the experiments in \cite{iclriclr} and compare the principal components of the actual hidden representations and the analytical prediction, i.e. top eigenvectors of ${\boldsymbol{W}}$. The result is shown in in \Cref{fig:gird-visualiza}. It is clear  that the analytical predictions align closely with the empirical principle components\footnote{All the experiments here and below are conducted with llama-3.1-8B using the NNsight library \citep{nnsight}.}.

\subsection{How Does ICL Suppress Original Semantic Meaning?}\label{sec:how-does-icl-suppress-original-meaning}

One of the most surprising observations in \citet{iclriclr} is that, under their proposed DGP, ICL can produce word embeddings that no longer reflect the original semantic meaning of each word, but instead align solely with the structure imposed by the DGP. Our theory provides a natural explanation for this phenomenon: the ``meaning'' encoded in a word embedding can be seen as a combination of multiple frequency components. However, as both the model depth and sequence length increase, higher-frequency components are progressively suppressed through the double convergence process. As a result, the semantic features associated with these higher-frequency components are effectively erased, and the representation becomes increasingly dominated by the low-frequency structure induced by the DGP.

\subsection{Why Start From the 2nd Eigenvector?} \label{sec:start-from-2nd-eigvec}

In both \Cref{thm:final} and \Cref{fig:gird-visualiza}, we intentionally omit the first eigenvector of ${\boldsymbol{M}}$. This is due to the coincidental alignment between the Laplacian and PCA. In short, the 1st eigenvector of ${\boldsymbol{M}}$ corresponds to a constant vector added to each ${\boldsymbol{z}}_x'$; however, PCA involves a 
\textbf{centralization step} that removes the mean component from the data. Specifically, it can proved that the first eigenvector of ${\boldsymbol{M}}$ is exactly ${\boldsymbol{d}}^{-1/2}$ \citep{sagt}, and the centralization operation is to projecting  ${\boldsymbol{V}}_n^{(L)}$ onto the space orthogonal to ${\boldsymbol{1}}$, which is equivalent to projecting ${\boldsymbol{V}}_n^{(L)} {\boldsymbol{D}}^{1/2}$ onto the space orthogonal to ${\boldsymbol{d}}^{-1/2}$, effectively removing the component aligned with the first eigenvector of ${\boldsymbol{M}}$.

\subsection{Why Are Peripheral Nodes Compressed?}\label{sec:empirical-matrix-narrower}

In \cite{iclriclr}, the authors keenly observed that the empirical PCA results (say, for the grid graph as in \Cref{fig:gird-visualiza}(c)), despite roughly showing the underlying grid structure, appears slightly compressed near the periphery, compared to the actual grid formed by the eigenvectors of the original graph (as in \Cref{fig:gird-visualiza}(a), or Figure 7 in \cite{iclriclr}). The authors attributed this distortion to uneven visitation frequencies in the random walk:

\begin{quote}\textit{
    ... due to lack of periodic boundary conditions, concepts that are present in the inner 2$\times$2 region of the grid are visited more frequently during a random walk on the graph, while the periphery of the graph has a lower visitation frequency.
}\end{quote}

While there is indeed a difference in visitation frequency, we argue that it is not the most fundamental explanation. The true cause lies in the context-wise process. As shown in \Cref{thm:context-wise-convergence}, the transformation applied by the attention map to the latent representations is modulated by the stationary distribution ${\boldsymbol{\pi}}$. As a result, the actual graph the model is aware of is the reweighted graph ${\boldsymbol{W}}$ instead of the original one $\widetilde {\boldsymbol{W}}$. The top eigenvectors are twisted a little bit according to node degrees since it is reweighted by ${\boldsymbol{\pi}}$.

\begin{figure}[tbp]
    \centering
    \begin{minipage}{0.30\linewidth}\centering
    \includegraphics[width=\linewidth]{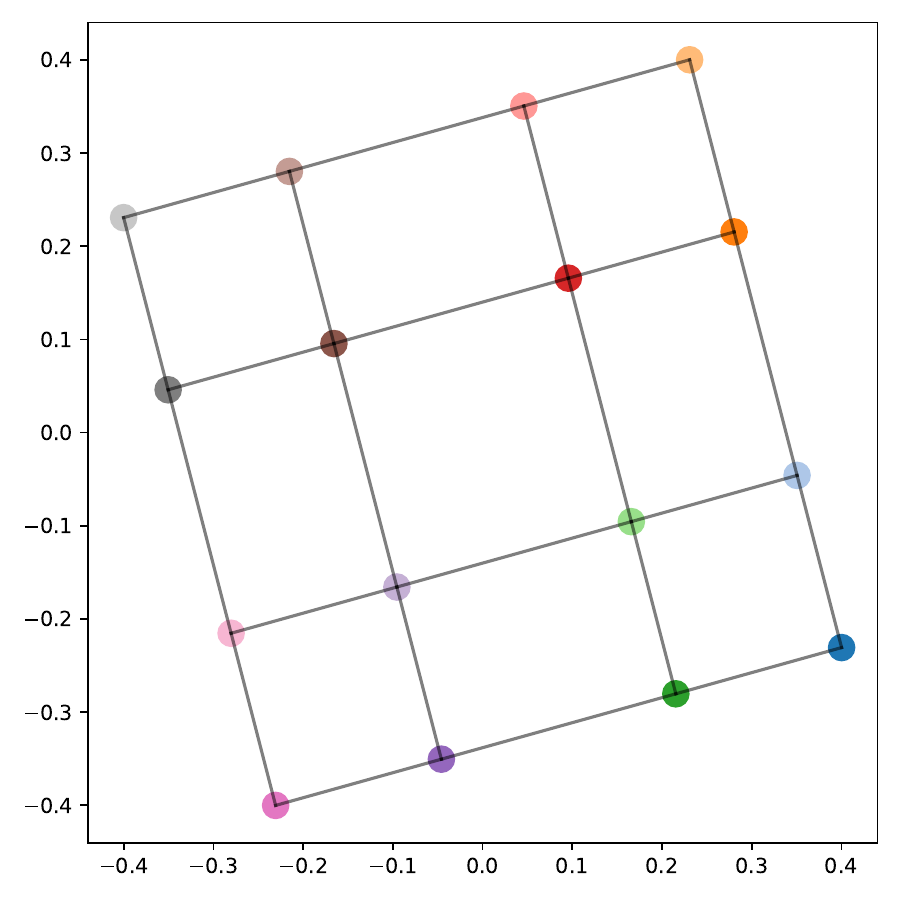}
    (a)
    \end{minipage}
    \hfill
    \begin{minipage}{0.30\linewidth}\centering
    \includegraphics[width=\linewidth]{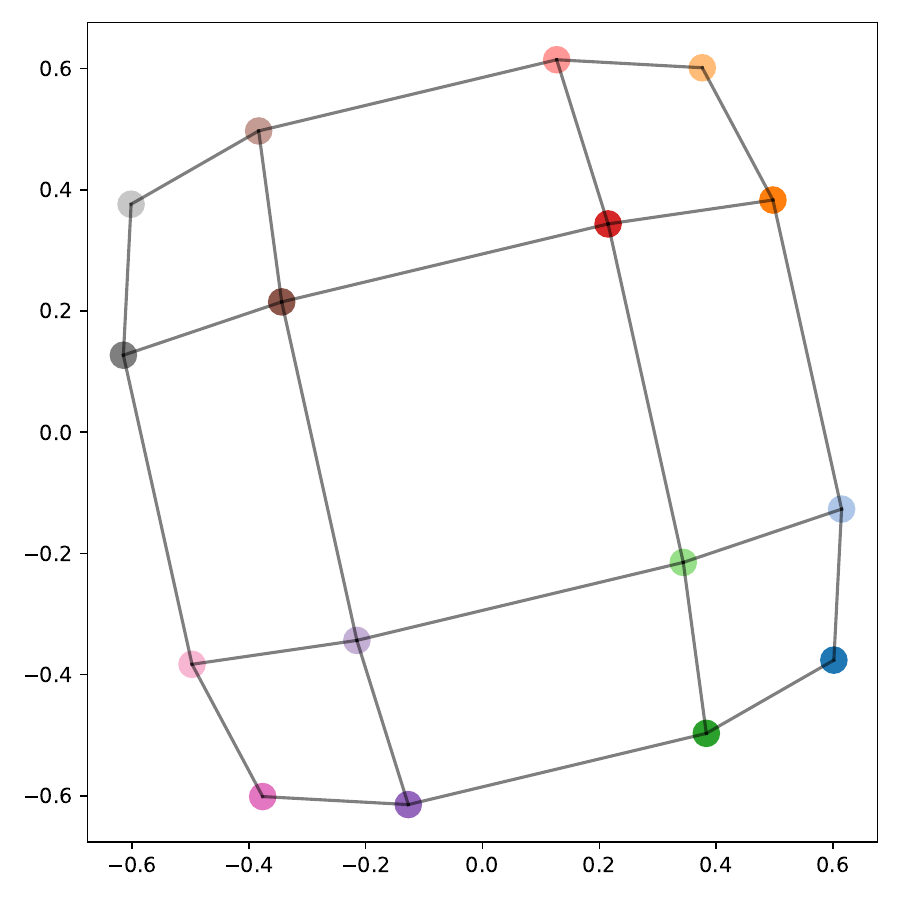}
    (b)
    \end{minipage}
    \hfill
    \begin{minipage}{0.30\linewidth}\centering
        \includegraphics[width=\linewidth]{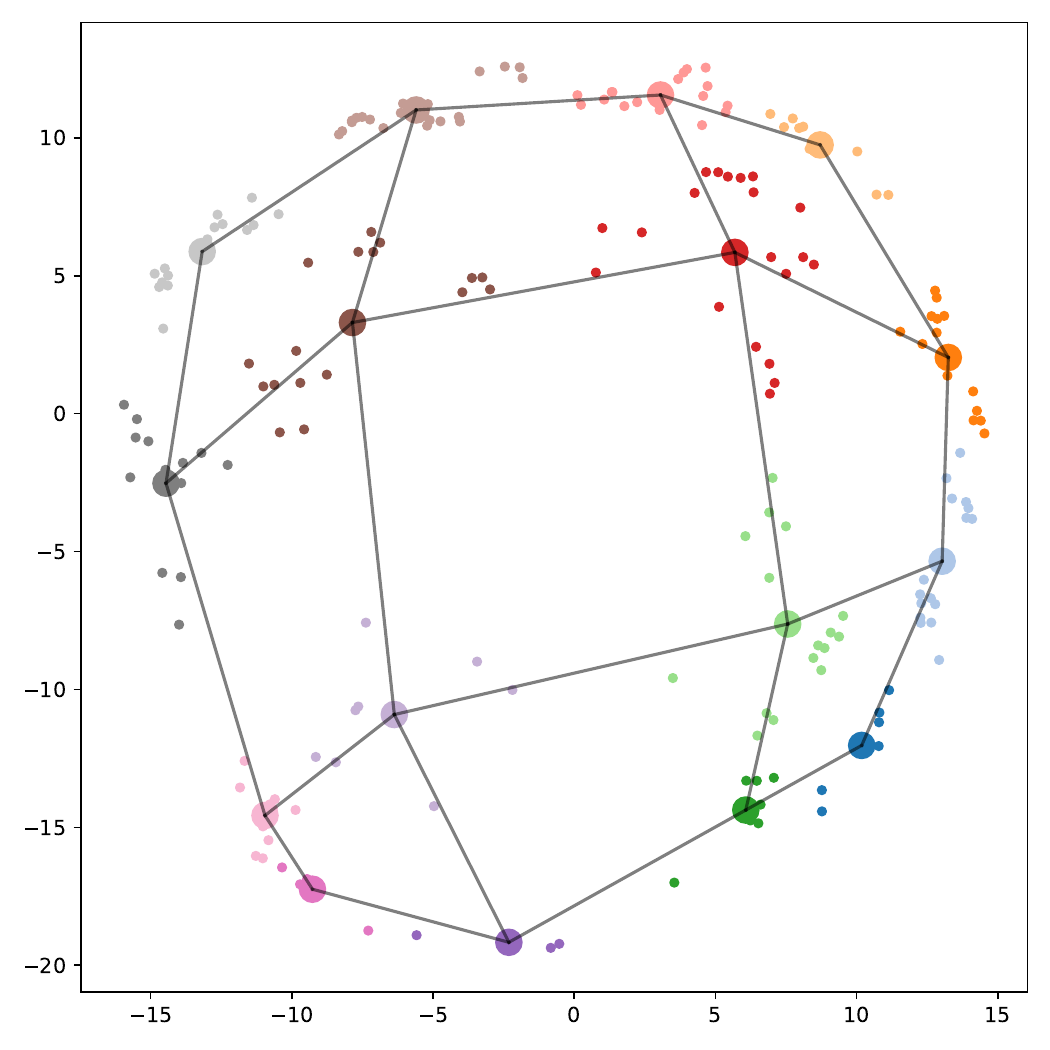}
        (c)
    \end{minipage}
    
\caption{\textbf{Comparison between empirical observations and theoretical predictions.} \textbf{(a)}: The 2nd and 3rd eigenvectors of  $\widetilde{\boldsymbol{W}}$, illustrating the idealized grid structure hypothesized in \citet{iclriclr}; \textbf{(b)}: The 2nd and 3rd eigenvectors of  ${\boldsymbol{M}}$, as predicted by \Cref{thm:final}; \textbf{(c)}: The first two principal components of the actual hidden representations from a pre-trained language model (llama-3.1-8B), collected at context positions 2360-2560. In all subfigures, each point represents the latent representation of a word in the vocabulary. The $x$- and $y$-axes represent the values of the 2nd and 3rd eigenvectors (or the 1st and 2nd principal components), respectively. A small rotation was applied to panels (a) and (b) to better visually align them with (c). This is valid since the 2nd and 3rd eigenvalues of ${\boldsymbol{W}}$ are equal, and their eigenspace is invariant under rotation.}\label{fig:gird-visualiza}
    
\end{figure}

\subsection{Why Energy Decreases but Doesn’t Vanish?}\label{sec:energy-goes-down}

In \cite{iclriclr}, the authors hypothesized that the structure of the representation is a consequence of energy minimization. While this observation aligns with empirical trends, we argue that energy decay is not the fundamental cause, since 1) it doesn't explain why the model follows the principle of energy decaying, and 2) the energy does not actually converges to $0$, despite the $0$-energy solutions are actually easy to find. Instead, both the energy decay and the structured representations are consequences of the double convergence. 

Formally, let $\Pi_i$ be the projection operator onto the $i$-th eigenspace of ${\boldsymbol{M}}$, the energy of the latent representation $\{{\boldsymbol{z}}_x\}_{x \in [c]}$ under the graph ${\boldsymbol{W}}$ can be decomposed as follows:
\begin{align}
\sum_{x,y \in [c]} w_{x,y} \left\|{\boldsymbol{z}}_x - {\boldsymbol{z}}_y\right\|^2 = \sum_{i=1}^{c} \sum_{x,y \in [c]} w_{x,y} \left\| \Pi_{i}({\boldsymbol{z}}_x - {\boldsymbol{z}}_y) \right\|^2.
\end{align}
As shown in \Cref{thm:final}, the projections of the latent representations onto the low eigenspaces converges to ${\boldsymbol{0}}$\footnote{In principle, \Cref{thm:final} is a relative result. However, with a similar proof one can show that the numerator also actually converge to $0$ as long as the corresponding eigenvalues are significantly smaller than $1$.}, and thus $\left\|\Pi_i(  {\boldsymbol{z}}_x - {\boldsymbol{z}}_y)\right\|$ converges to $0$ for large $i$. The energy decay is thus a consequence of the representation leaving corresponding eigenspace. 

On the other hand, for the top eigenspaces of ${\boldsymbol{M}}$ (i.e. small $i$), the the projection components persist or even grow. This explains why the total energy does not decay to zero: the representation is leaving high-frequency eigenspaces, but accumulating energy in low-frequency ones.

We validate this explanation empirically in \Cref{fig:energy}. While the overall energy decreases across layers, the energy in low-frequency directions (e.g., Component 1 and 2) increases, confirming our theoretical prediction: energy decay arises from the attenuation of high-frequency components, whereas the persistence of low-frequency components prevents the total energy from converging to $0$.

\subsection{Predicted Robustness Against Noise}\label{sec:self-correction}

\begin{figure}[tbp]
    \centering
    \begin{minipage}[t]{0.49\textwidth}
          \includegraphics[width=\textwidth]{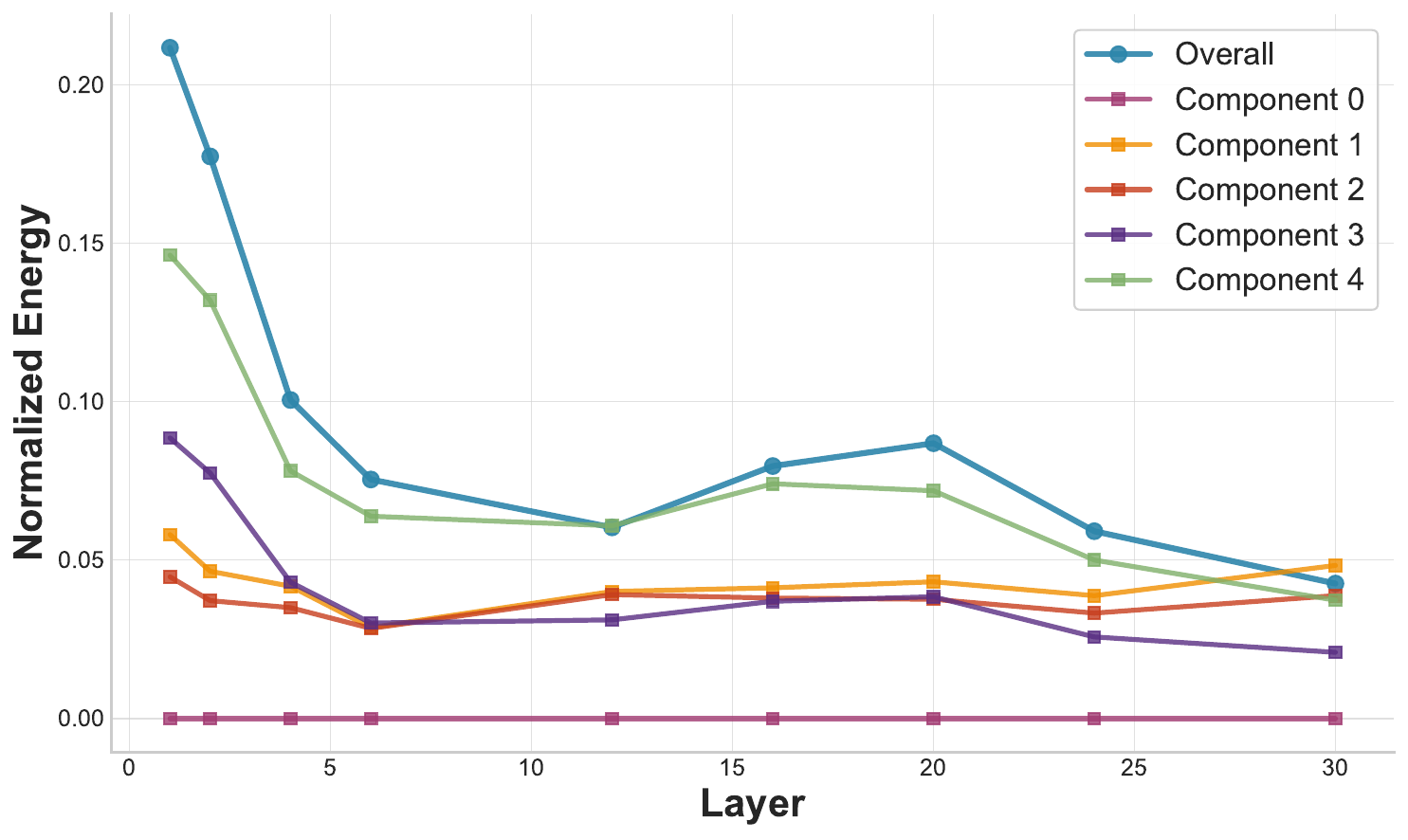}
  \caption{\textbf{The Normalized energy evolution across layers.} Each ``Component'' curve shows the energy along a direction defined by the $k$-th eigenvector of ${\boldsymbol{M}}$. We only show the first $5$ components as an illustration. The “Overall” curve represents the average energy across all directions. To eliminate scale effects, the matrix ${\boldsymbol{Z}}^{(\ell)}$ is normalized to have unit Frobenius norm at each layer. }\label{fig:energy}
    \end{minipage}
    \hfill
    \begin{minipage}[t]{0.49\textwidth}
          \includegraphics[width=\textwidth]{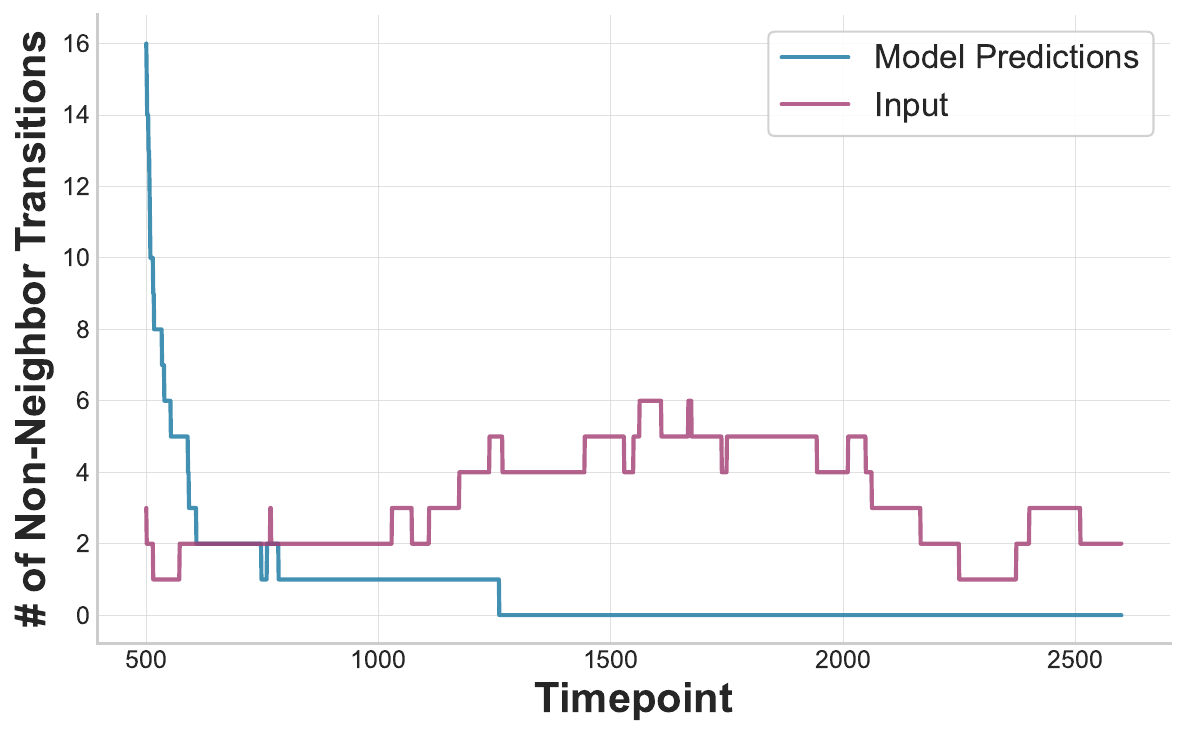}
  \caption{\textbf{Predicted Robustness Against Input Noise.} We inject 1\% random noise into the input sequence and plot the number of non-neighbor transitions within a sliding window of the last 500 tokens.}
\label{fig:autocorrect}
    \end{minipage}
    
\end{figure}

Our theoretical framework predicts that high-frequency components in the hidden representations will naturally decay over context and layers. This implies that LLMs performing ICL should be inherently robust to high-frequency noise. Since natural signals are typically dominated by low-frequency structure \citep{natural-low-freq}, this suggests that ICL should be able to tolerate and even correct a moderate amount of errors in the input.

To test this prediction, we conduct an additional experiment under a noisy data-generating process. Specifically, during the random walk over the graph $\mathcal{G}$, we inject noise by allowing the sequence to transition to a uniformly random token in $[c]$ with $1\%$ probability at each step, rather than to a graph neighbor. This corruption can be viewed as temporarily replacing the original graph with a complete graph, which introduces purely high-frequency components into the sequence.

In \Cref{fig:autocorrect}, we measure the number of non-neighbor transitions (i.e. token pairs that do not correspond to valid edges in $\mathcal{G}$) within a sliding window of the last 500 tokens. We compare this quantity in both the input sequence and the output predicted by the ICL model. While the input maintains a constant error rate due to the injected corruption, the ICL output gradually eliminates these errors. Once the context becomes sufficiently long, the model consistently produces transitions that respect the original graph structure, effectively achieving $100\%$ accuracy despite the noisy input.

This result provides further evidence that ICL dynamics favor low-frequency structure and can suppress high-frequency perturbations. This result also explains previous observations that ICL can implicitly denoise input data \citep{self-correct-1,self-correct-2}.

\section{Summary}\label{sec:conclusion}

In this paper, we investigate the dynamics of hidden representations across both context length and depth (layers) in a pre-trained Transformer. Under reasonable assumptions on the attention maps and the DGP, we formally prove a double convergence phenomenon: hidden representations converge both as the input context grows and as the model depth increases. The limit representations exhibit a low-frequency bias, which accounts for several previously observed phenomena in \cite{iclriclr}, as well as widely observed  robustness of ICL under high-frequency noise \citep{self-correct-1,self-correct-2}.

In addition to the main theoretical results, we provide general techniques in the appendix that relax the dependence on any specific DGP. We hypothesize that the low-frequency bias is a universal property of pre-trained Transformers given data generated by Markov processes. That being said, the most detailed results in this paper still focus on the specific DGP proposed in \citet{iclriclr}. Generalizing our analysis to broader data distributions remains an open and important future direction.

Finally, this work suggests a path towards an end-to-end theoretical understanding of low-frequency bias in LLMs. A key next step is to analyze the pre-training process itself, and show how the structural assumptions we impose on attention maps naturally emerge from gradient-based learning during training.

\newpage 
\bibliography{iclr2025_conference}
\bibliographystyle{iclr2025_conference}

\newpage 
\appendix
\section{Proof Theoretical Results w.r.t. Context-wise Convergence}

In this section, we prove \Cref{thm:context-wise-convergence}. The proof start by identifying a high-probability event in the random walk sequence that ensures it is ``regular'' enough.

\subsection{Events in a Random Walk Sequence}
\begin{theorem}[Theorem 1 in \citep{fan2021hoeffding}]\label{thm:hoeffding-markovchain}
    Given graph $\mathcal G = (\mathcal V, \mathcal E)$ with stationary distribution ${\boldsymbol{\pi}}$, there exists constant $C > 0$ satisfies the following statement. Let $\{x_i\}_{i=1}^\infty \in \mathcal V^\mathbb N$ be a random walk sequence on $\mathcal G$ starting from the stationary distribution, and $\{f_i\}_{i=1}^\infty$ be a sequence of bounded functions satisfying $f_i(\mathcal V) \subseteq [-\alpha_i, \alpha_i]$, then for any $k \in \mathbb N$ and $\epsilon > 0$, \begin{align}
    \mathbb P\left\{ \left|\sum_{i=1}^k f_i(x_i) - \sum_{i=1}^k {\boldsymbol{\pi}}(f_i)\right| > \epsilon \right\} \leq 2 \exp\left( -\frac{C \epsilon^2}{k \sum_{k=1}^k \alpha_i^2}\right),
    \end{align}
    where ${\boldsymbol{\pi}}(f_i) = \sum_{x \in \mathcal V} \pi_x f_i(x)$ is the expectation of $f_i$ under the distribution defined by ${\boldsymbol{\pi}}$.
\end{theorem}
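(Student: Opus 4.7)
The plan is to follow the classical Chernoff-bound route, adapted to reversible Markov chains via a perturbed transfer operator. First, I would fix $\lambda > 0$ and apply the exponential Markov inequality to reduce the one-sided tail bound to controlling the centered moment generating function
\[
M(\lambda) \;=\; \mathbb E\!\left[\exp\!\Bigl(\lambda \sum_{i=1}^k \bigl(f_i(x_i) - {\boldsymbol{\pi}}(f_i)\bigr)\Bigr)\right];
\]
a symmetric argument applied to $-f_i$ then handles the two-sided tail. Since the chain is a simple random walk on the undirected graph $\mathcal G$, it is reversible with respect to ${\boldsymbol{\pi}}$, so its transition operator $P$ is self-adjoint on the ${\boldsymbol{\pi}}$-weighted Hilbert space, with spectrum in $[-1,1]$ and an absolute spectral gap $\gamma = \gamma(\mathcal G) > 0$ that depends only on $\mathcal G$.

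Next, I would rewrite the MGF as a telescoping product of perturbed transfer operators. Letting $D_i(\lambda)$ be the diagonal multiplication operator by $e^{\lambda f_i(\cdot)}$, a direct computation gives
\[
M(\lambda) \;=\; e^{-\lambda \sum_i {\boldsymbol{\pi}}(f_i)} \cdot {\boldsymbol{\pi}}^\top D_1(\lambda)\, P\, D_2(\lambda)\, P \cdots P\, D_k(\lambda)\, \mathbf 1.
\]
By conjugation, the spectral radius of each factor $P\,D_i(\lambda)$ matches that of the symmetric operator $P_i(\lambda) := D_i(\lambda)^{1/2}\, P\, D_i(\lambda)^{1/2}$, which is self-adjoint in the ${\boldsymbol{\pi}}$-inner product. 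Cauchy--Schwarz in that inner product then reduces $M(\lambda)$ to controlling $\prod_{i=1}^k \|P_i(\lambda)\|_{\boldsymbol{\pi}}$ up to a benign boundary factor.

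The key spectral estimate is a second-order perturbation bound on $\|P_i(\lambda)\|_{\boldsymbol{\pi}}$. At $\lambda=0$, $P_i(0) = P$ has leading eigenvalue $1$ with eigenvector $\mathbf 1$ and second eigenvalue at most $1-\gamma$ in absolute value. Applying Kato's analytic perturbation theory, or equivalently the Gillman/Lezaud calculation tailored to this setting, one obtains, for $|\lambda|\alpha_i$ below a fixed threshold,
\[
\log \|P_i(\lambda)\|_{\boldsymbol{\pi}} \;\leq\; \lambda\, {\boldsymbol{\pi}}(f_i) \;+\; \frac{c\,\lambda^2 \alpha_i^2}{\gamma} \;+\; O\!\bigl(\lambda^3 \alpha_i^3\bigr).
\]
Summing over $i$, the linear terms cancel against the recentering prefactor $e^{-\lambda \sum_i {\boldsymbol{\pi}}(f_i)}$, yielding $\log M(\lambda) \leq c\,\lambda^2 \sum_{i=1}^k \alpha_i^2 / \gamma$ in that regime. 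Optimizing $\lambda = \Theta(\epsilon / \sum_i \alpha_i^2)$ then recovers a sub-Gaussian tail, with the graph-dependent constant $1/\gamma$ absorbed into $C$.

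The main obstacle is extending the clean quadratic MGF bound to the full range of $\lambda$ demanded by the tail-optimization, not just the small-$\lambda$ regime where the Kato expansion is valid. The standard fix is a truncation/Bernstein argument: in the large-$\lambda$ regime one replaces the sharp quadratic bound by a crude inequality using only $|f_i| \leq \alpha_i$ and reversibility, which loses an additional factor of order $k$; this is precisely the mechanism producing the weakening factor $k$ in the denominator of the stated exponent. Verifying this truncation uniformly across $\lambda$, and tracking all constants so that they collapse into the single $C = C(\mathcal G)$, are the technical places that require the most care.
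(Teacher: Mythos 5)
The first thing to note is that the paper does not prove this statement at all: it is imported verbatim as Theorem~1 of \citet{fan2021hoeffding}, so there is no in-paper proof to compare against. What you have written is a sketch of a proof of the cited result itself, via the classical Gillman--Lezaud route: Chernoff bound, transfer-operator factorization of the moment generating function, symmetrization $D_i^{1/2}PD_i^{1/2}$, and a Kato perturbation expansion of the top eigenvalue. This is a legitimate and well-trodden path for reversible chains, but it is genuinely different from the proof in Fan--Jiang--Sun, who avoid perturbation series entirely, bound the operator norm of each perturbed factor non-asymptotically for \emph{all} $\lambda$ (which is what yields a true Hoeffding rather than Bernstein constant), and handle non-reversible chains. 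Your approach buys familiarity and reversibility-specific simplicity; theirs buys the full range of $\lambda$ without truncation and generality beyond random walks on undirected graphs.

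Three points in your sketch need repair before it constitutes a proof. First, after inserting $D_i^{1/2}D_i^{-1/2}$ the product regroups into factors of the form $D_i(\lambda)^{1/2}PD_{i+1}(\lambda)^{1/2}$; since the $f_i$ are allowed to differ, these are \emph{not} self-adjoint, so ``operator norm equals spectral radius of $P_i(\lambda)$'' does not apply directly and an extra interpolation step (e.g.\ $\|D_i^{1/2}PD_{i+1}^{1/2}\|^2=\|D_{i+1}^{1/2}PD_iPD_{i+1}^{1/2}\|$) is required. Second, your explanation of the factor $k$ in the denominator as the price of a large-$\lambda$ truncation is a post-hoc rationalization: the cited theorem has no such factor (its exponent is $-(1-\lambda)\epsilon^2/(2\sum_i\alpha_i^2)$), the extra $k$ here is almost certainly a transcription artifact, and indeed the paper's own Corollary~\ref{cor:bound-appearance-set} (which needs a tail of order $\exp(-C\epsilon^2k)$ after substituting $\epsilon\mapsto\epsilon k$) does not follow from the inequality as literally written with the extra $k$. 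Proving the stronger, $k$-free bound and observing that the stated one follows is the cleaner move. Third, your entire spectral estimate presumes a positive \emph{absolute} spectral gap, which fails for bipartite $\mathcal G$ (the grid graph used throughout the paper has $-1$ as an eigenvalue of $P$); one must pass to the lazy or two-step chain, or otherwise argue around periodicity, for the constant $C(\mathcal G)$ to be positive in the cases the paper actually uses.
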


Notice that in \cref{thm:hoeffding-markovchain}, we view the spectral property of the graph as constant and absorb it into $C$. Below is a direct corollary of  \Cref{thm:hoeffding-markovchain}. 

\begin{corollary}\label{cor:bound-appearance-set} There exists constants $C,C'$ (that probably depends on $\mathcal G$) such that the following inequality holds. 
\begin{align}
\forall S \subseteq [c], \mathbb P\left\{ \left|\frac{\sum_{y \in S} F_{y,k}}{k} - \sum_{y \in S}\pi_y \right| > \epsilon\right\} & \leq 2\exp\left(-C \epsilon^2 k\right),
\end{align}
in other words, with probability at least $0.999$, we have \begin{align}
\forall k \in [n], \forall S \subseteq [c], \left|\frac {\sum_{y \in S} F_{y,k}}{k} - \sum_{y \in S} \pi_y\right| \leq \frac{C' \log (n)}{\sqrt{k}}.
\end{align}
\end{corollary}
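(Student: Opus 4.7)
The plan is to reduce the corollary to the Hoeffding-type concentration inequality for Markov chains (Theorem~\ref{thm:hoeffding-markovchain}) via an indicator-function argument, and then to upgrade the pointwise bound to a uniform bound via a union bound. First I would fix a subset $S \subseteq [c]$ and apply Theorem~\ref{thm:hoeffding-markovchain} with the constant test function $f_i \equiv f$, where $f(y) := \mathbbm 1_{\{y \in S\}}$. With this choice $\sum_{i=1}^k f(x_i) = \sum_{y \in S} F_{y,k}$ and ${\boldsymbol{\pi}}(f) = \sum_{y \in S} \pi_y$, and because $f$ takes values in $\{0,1\}$ we may center it and take $\alpha_i = O(1)$ uniformly. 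Substituting $\epsilon \mapsto k\epsilon$ and dividing through by $k$ then yields the first displayed tail bound; the constant $C$ absorbs the $\alpha_i^2$ factors together with the spectral-gap-type constant from Theorem~\ref{thm:hoeffding-markovchain} (which depends only on $\mathcal G$).

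One technical wrinkle is that Theorem~\ref{thm:hoeffding-markovchain} assumes the chain starts from the stationary distribution, whereas in our DGP the first $c$ tokens form a deterministic traversal of the vocabulary and only $x_{c+1}$ onward is drawn stationarily. I would handle this by splitting $F_{y,k}$ into its contribution from $i \le c$ and $i > c$; since $c$ depends only on $\mathcal G$, the prefix contributes at most $O(c/k) = O(1/k)$ additive error to the normalized frequency, which is dominated by the eventual $C' \log n / \sqrt k$ bound and can be absorbed into $C'$.

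For the uniform statement, I would union-bound over the $|\mathbbm 2^{[c]}| \cdot n = 2^c n$ pairs $(S,k)$. Setting $\epsilon_k := C' \log n / \sqrt k$, the pointwise bound gives failure probability at each $(S,k)$ at most $2\exp(-C (C')^2 \log^2 n) \le 2 n^{-C (C')^2 \log n}$. Choosing $C'$ sufficiently large (depending on $c$ and $C$, hence on $\mathcal G$), the total failure probability $2^{c+1} n \cdot n^{-C (C')^2 \log n}$ can be made at most $0.001$, producing a single event of probability $\ge 0.999$ on which the bound holds simultaneously for every $k \in [n]$ and every $S \subseteq [c]$.

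There is no genuinely hard step here: the whole corollary is essentially bookkeeping on top of Theorem~\ref{thm:hoeffding-markovchain}. The only places that require any care are handling the deterministic prefix cleanly and tuning $C'$ so that the $\log^2 n$ in the exponent dominates the $\log(2^c n)$ from the union bound; both are mechanical once the setup above is in place.
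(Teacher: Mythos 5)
Your proof is correct and follows exactly the route the paper intends: the paper offers no explicit argument, simply declaring the corollary a direct consequence of \Cref{thm:hoeffding-markovchain}, and your indicator-function reduction, handling of the deterministic length-$c$ prefix, and union bound over the $2^c n$ pairs $(S,k)$ with $\epsilon_k = C'\log n/\sqrt{k}$ are precisely the bookkeeping needed to make that declaration rigorous. One small caution: your substitution $\epsilon \mapsto k\epsilon$ yields the claimed exponent $-C\epsilon^2 k$ only under the standard form of the cited concentration bound, whose denominator is $\sum_{i}\alpha_i^2$; as transcribed in the paper the denominator reads $k\sum_{i}\alpha_i^2$, which would give merely $2\exp(-C\epsilon^2)$ with no decay in $k$, so the extra factor of $k$ there appears to be a typo that your argument (and the corollary itself) implicitly corrects.
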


Notice that we assumed $n > 10c$, therefore the following statement is also a direct corollary of \Cref{thm:hoeffding-markovchain}.\begin{corollary}\label{cor:showup-in-last-half}
    The following statement holds with probability at least $0.999$: for any $x \in [c]$, $F_{x,n} \geq F_{x, \lceil n/2 \rceil} + 1$.
\end{corollary}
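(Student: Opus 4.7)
The plan is to apply the Markov-chain Hoeffding bound (\Cref{thm:hoeffding-markovchain}) to the count of each word in the second half of the sequence, then take a union bound over the vocabulary. Fix $x \in [c]$ and consider
\begin{align*}
F_{x,n} - F_{x, \lceil n/2 \rceil} = \sum_{i = \lceil n/2 \rceil + 1}^n \mathbbm 1_{\{x_i = x\}}.
\end{align*}
The DGP samples $x_{c+1} \sim {\boldsymbol{\pi}}$, so the random walk is in stationarity from position $c+1$ onward. The hypothesis $n > 10c$ gives $\lceil n/2 \rceil + 1 > c+1$, so by the Markov property the subsequence $x_{\lceil n/2 \rceil + 1}, \ldots, x_n$ is a stationary random walk of length $m := n - \lceil n/2 \rceil \approx n/2$, and \Cref{thm:hoeffding-markovchain} applies directly to it.

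Next, I would take $f_i(y) = \mathbbm 1_{\{y = x\}}$ so that $\alpha_i = 1$ and ${\boldsymbol{\pi}}(f_i) = \pi_x$. The expected count in the second half is then $m \pi_x$. Choosing the deviation $\epsilon = m\pi_x - 1$ in \Cref{thm:hoeffding-markovchain} yields
\begin{align*}
\mathbb P\left\{ F_{x,n} - F_{x, \lceil n/2 \rceil} < 1 \right\} \leq 2 \exp\left(-\frac{C (m \pi_x - 1)^2}{m}\right),
\end{align*}
which decays like $\exp(-\Omega(m \pi_x^2))$ once $m\pi_x \gg 1$. Since $\mathcal G$ is connected and finite, $\pi_{\min} := \min_{y \in [c]} \pi_y > 0$ is a constant depending only on $\mathcal G$, so for all sufficiently large $n$ (depending only on $\mathcal G$) the right-hand side is at most $0.001/c$. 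A union bound over the $c$ words then yields the simultaneous guarantee $F_{x,n} \geq F_{x, \lceil n/2 \rceil} + 1$ for every $x \in [c]$ with probability at least $0.999$.

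The only subtlety, and hardly an obstacle, is the mild gap between the stated hypothesis $n > 10c$ and what the concentration argument actually needs, namely $n$ larger than a constant multiple of $\pi_{\min}^{-2} \log c$. This is harmless in context: the main theorems already take $n \to \infty$, and the implicit constant can be absorbed into the ``depends only on $\mathcal G$'' boilerplate used in \Cref{cor:bound-appearance-set}. Beyond this bookkeeping, the corollary is a one-line consequence of \Cref{thm:hoeffding-markovchain} plus a union bound.
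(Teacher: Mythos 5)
Your proof is correct and follows essentially the same route as the paper, which simply asserts the corollary as a direct consequence of \Cref{thm:hoeffding-markovchain} under $n>10c$; you fill in the choice of $\epsilon$, the restriction to the stationary second-half subsequence, and the union bound over $[c]$ that the paper leaves implicit. Your caveat that $n>10c$ alone does not make the exponential small (one also needs $n$ large relative to $\pi_{\min}^{-2}\log c$) is a fair catch of a point the paper glosses over, and your resolution of absorbing it into the asymptotic/constant-depends-on-$\mathcal G$ framework is consistent with how the paper treats \Cref{cor:bound-appearance-set}.
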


\subsection{Proof of \Cref{thm:context-wise-convergence}}

Now we prove \Cref{thm:context-wise-convergence}. We start by a lemma that is easy to verify.

\begin{lemma}\label{lem:bound-frac-minus}
    If $a,b,r,s > 0$ satisfies $|a-r| \leq \epsilon$ and $|b-s| \leq \epsilon$ and $\epsilon \leq s/2$, then $\left|\frac{a}{b} - \frac rs\right| \leq 2 \epsilon\frac{r+s}{s^2}$.
\end{lemma}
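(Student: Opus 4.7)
The plan is to reduce the estimate to a single common-denominator expression and then control the numerator and denominator separately using the two perturbation bounds. The first step is to write
\[
\frac{a}{b} - \frac{r}{s} \;=\; \frac{as - br}{bs},
\]
so the problem splits cleanly into upper-bounding $|as - br|$ and lower-bounding $bs$.

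For the numerator, substituting $a = r + (a-r)$ and $b = s + (b-s)$ causes the $rs$ cross terms to cancel, leaving the identity $as - br = (a-r)\,s - (b-s)\,r$. Combining this with $|a-r| \le \epsilon$, $|b-s| \le \epsilon$ and the triangle inequality gives $|as - br| \le \epsilon(r + s)$. For the denominator, the hypothesis $\epsilon \le s/2$ together with $|b - s| \le \epsilon$ forces $b \ge s - \epsilon \ge s/2$, and therefore $bs \ge s^2/2$. Putting the two estimates together produces
\[
\left|\frac{a}{b} - \frac{r}{s}\right| \;=\; \frac{|as - br|}{bs} \;\le\; \frac{\epsilon(r + s)}{s^2/2} \;=\; 2\epsilon \cdot \frac{r + s}{s^2},
\]
which is exactly the claimed bound.

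There is no substantive obstacle here; the only place the extra hypothesis $\epsilon \le s/2$ is really used is in the denominator lower bound, which also explains why the natural error scale is $1/s^2$ rather than $1/(rs)$ or $1/s$. I expect this lemma to be invoked inside the proof of \Cref{thm:context-wise-convergence}, where $a,b$ will play the role of empirical prefix counts $\sum_{j\le k} a_{k,j}\mathbbm 1_{\{x_j = y\}}$ or $F_{y,k}$ along the random walk while $r,s$ play the role of their stationary-distribution counterparts, and \Cref{cor:bound-appearance-set} (together with the $\psi/\sqrt{k}$ bound of \Cref{def:reflecting-attention}) supplies exactly the $\epsilon \lesssim \log(n)/\sqrt{k}$ perturbation bounds that the lemma consumes.
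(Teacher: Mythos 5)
Your proof is correct: the identity $as-br=(a-r)s-(b-s)r$, the triangle-inequality bound $|as-br|\le\epsilon(r+s)$, and the lower bound $bs\ge s^2/2$ from $\epsilon\le s/2$ together give exactly the claimed estimate. The paper omits the proof entirely (it calls the lemma ``easy to verify''), and your argument is the natural one-line verification it has in mind, including the correct identification of where the lemma is used in the proof of Theorem~\ref{thm:context-wise-convergence}.
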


    Let $\left\{{\boldsymbol{u}}_k\right\}_{k=1}^n = {\boldsymbol{A}} {\mathcal{V}}$. Let ${\boldsymbol{z}}'_x = \frac{\sum_{y \in f(x)} \pi_y {\boldsymbol{z}}_y}{\sum_{y \in f(x)} \pi_y}$.

    For $k \leq c$, we have \begin{align}
    \left\|{\boldsymbol{u}}_k - {\boldsymbol{z}}'_k\right\| \leq \sum_{j=1}^k a_{k,j}  \left\| {\boldsymbol{v}}_j\right\| + \left\| {\boldsymbol{z}}'_{k}\right\| 
    \leq \sum_{j=1}^k \psi (\gamma + N) + N 
     \leq \frac{\sqrt{c}(c+1) \left(\psi \gamma + \psi N \right)}{\sqrt{k}}.
    \end{align}
    Therefore, the error can be absorbed into the $C\psi(\gamma + N)$ terms since $C$ is allowed to be dependent on $c$. Below we only consider $k > c$. Moreover, if $f(x_k) = \emptyset$, then it is obvious that ${\boldsymbol{u}}_k = {\boldsymbol{0}} = {\boldsymbol{z}}'_{x_k}$. Therefore, in the following we only consider the case where $f(x) \neq \emptyset$.

    Let $R_k = \sum_{y \in f(x)} F_{y,k}$. Let $\widetilde {\boldsymbol{z}}_x^{(k)} = \sum_{y\in f(x)} \frac{F_{y,k} {\boldsymbol{z}_y}}{R_k}$. We have \begin{align}
    \left\|{\boldsymbol{u}}_k - \widetilde {\boldsymbol{z}}_x^{(k)}\right\| & =\left\| \sum_{y \in f(x)} \left(\sum_{j \in [k]\atop x_j = y} a_{k,j} {\boldsymbol{v}}_j - \frac{{\boldsymbol{z}}_{y}F_{y,j}}{R_k}\right) \right\|
    \\ & = \left\| \sum_{y \in f(x)} \sum_{j \in [k]\atop x_j = y}\left( a_{k,j} {\boldsymbol{v}}_j - \frac{{\boldsymbol{z}}_{x_j}}{R_k}\right) \right\|
    \\ & = \left\| \sum_{y \in f(x)} \left( \sum_{j \in [k]\atop x_j = y}\left( a_{k,j}  {\boldsymbol{v}}_j - {\boldsymbol{z}}_{x_j}\right)\right) +\sum_{y \in f(x)} {\boldsymbol{z}}_y\sum_{j \in [k] \atop x_j = k}\left(a_{k,j} - \frac{1}{R_k}\right)\right\|
    \\ & \leq \sum_{j \in [k]\atop x_j \in f(x)} a_{k,j}\left\|{\boldsymbol{v}}_j - {\boldsymbol{z}}_{x_j}\right\| + N \sum_{y \in f(x)} \left| \frac{F_{y,k}}{R_k} - \sum_{j \in [k]\atop x_j = y} a_{k,j} \right|
    \\ & \overset{\text{(i)}}\leq \gamma \left( \sum_{j \in [k] \atop x_j \in f(x)} \frac{a_{k,j}}{\sqrt{j}}\right) + \frac{Nc\psi}{\sqrt{k}}  
    \\ & \leq \gamma \left( \sum_{j = 1}^k \frac{a_{k,j}}{\sqrt{j}}\right) + \frac{Nc\psi}{\sqrt{k}},  
    \end{align}
    where in (i) we use the condition that ${\mathcal{V}}$ is a good sequence converging to ${\mathcal {Z}}$ and that ${\mathcal{A}}$ reflects $f$. 
    
    Now, define $S_{k,j} = \sum_{i=1}^j a_{k,j}$ and $S_{k,0} = 0$. Since ${\boldsymbol{A}}$ is a nice attention map with parameter $\psi$, we have $S_j \leq \frac{\psi j}{k}$. Notice that $a_{k,j} = S_{k,j} - S_{k,j-1}$. Thus we have \begin{align}
     \sum_{j =1 }^k \frac{ a_{k,j} }{\sqrt{j}}
     & = \sum_{j = 1}^k \frac{1}{\sqrt{j}}\left(S_{k,j} - S_{k,j-1}\right) 
    \\ & = \frac{ S_{k,k} }{\sqrt{k}} + \sum_{j=0}^{k-1} S_{k,j} \left(\frac{1}{\sqrt{j}} - \frac{1}{\sqrt{j+1}} \right)
    \\ & \leq \frac{\psi}{\sqrt{k}} + \frac{\psi}{k} \sum_{j=1}^{k-1}  j\left( \frac{1}{\sqrt{j}} - \frac{1}{\sqrt{j+1}}\right)
    \\ & = \frac{ \psi }{\sqrt{k}} + \frac{\psi}{k} \sum_{j=1}^{k-1}  \frac{1}{\sqrt{j}} - \frac{\psi (k-1)}{k\sqrt{k}}
    \\ & = \frac{\psi}{k} \sum_{j=1}^k\frac{1}{\sqrt{j} }.
    \end{align}
    Notice that,
    \begin{align}  \sum_{j=1}^{k} \frac{1}{\sqrt{j}} = 1 +  \sum_{j=2}^k \int_{j-1}^{j} \frac{1}{\sqrt{j}} \mathrm{d}  x
     \leq 1+ \sum_{j=2}^k \int_{j-1}^{j} \frac{1}{\sqrt{x}} \mathrm{d}  x
    = 1+ \int_{1}^{k} \frac{1}{\sqrt{x}} \mathrm{d}  x
      = 2\sqrt{k}. \label{eq:bound-sum-sqrt-inverse}
    \end{align}
    Subtracting \cref{eq:bound-sum-sqrt-inverse} into the argument above, we obtain \begin{align}
    \left\| {\boldsymbol{u}}_k - \widetilde {\boldsymbol{z}}_x^{(k)}\right\| \leq \frac{2\psi \gamma + Nc\psi}{\sqrt{k}}.
    \end{align}
    
    Moreover, \begin{align}
    \left\|\widetilde {\boldsymbol{z}}_x^{(k)} - {\boldsymbol{z}}'_x\right\| & = \left\|\sum_{y \in f(x)} \left(\frac{F_{y,k}}{R} - \frac{\pi_y}{\sum_{y' \in f(x)} \pi_{y'}}\right) {\boldsymbol{z}}_y\right\|
    \\ & \leq N \sum_{y \in [c]} \left|\frac{F_{y,k}/k}{R/k} - \frac{\pi_{y}}{\sum_{y' \in f(x)} \pi_{y'}}\right|.
    \end{align}

    Define $a_{k,y} = F_{y,k} / k$, $b_{k,y} = R/k$, $r_y = \pi_y$ and $s_y = \sum_{y' \in f(x)} \pi_{y'}$. From \Cref{cor:bound-appearance-set} we have there exists a constant number $C > 0$ that only depends on $\mathcal G$, and an event whose probability is at least $0.999$, such that for all $k \in [n]$ and $y \in [c]$ we have $\max\left\{ |a_{y,k} - r_y| , |b_{k,y} -s_y|\right\} \leq \frac{C\log n}{\sqrt{k}}$ (notice that this event is only related to the random walk, and does not depend on the specific values of $\mathcal V$, ${\boldsymbol{A}}$, etc.).

    Let $\rho = \min_{y \in [c]} \pi_y \in (0,1)$. Let $C' = 2C/\rho > C$ that also only depends on $\mathcal G$. If $k \geq \frac{4C^2(\log n)^2}{\rho^2}$, then $\frac{C\log n}{\sqrt{k}} \leq \frac{\rho}{2} \leq \frac{s_y}{2}$, thus from \Cref{lem:bound-frac-minus} we have $\left|\frac {a_{y,k}}{r_y} - \frac {b_{y,k}}{s_{y}}\right| \leq \frac{C\log n}{\sqrt{k}} \leq \frac{C'\log n}{\sqrt{k}}$. On the other hand, if $k < \frac{4C^2 (\log n)^2}{\rho^2}$, we have $\frac{C'\log n}{\sqrt{k}} \geq 1 \geq \left|\frac {a_{y,k}}{r_y} - \frac {b_{y,k}}{s_{y}}\right|$. Therefore, we conclude that, in an event at happens with probability at least $0.999$, for all $k \in [n]$ and $y \in [c]$ we have $\left|\frac{F_{y,k}}{R} - \frac{\pi_y}{\sum_{y' \in f(x)} \pi_{y'}}\right| \leq \frac{C'\log n}{\sqrt{k}}$. 
    
    Combining the above arguments, we conclude that with probability at least $0.999$, it holds that for all $k$, \begin{align}
    \left\|{\boldsymbol{u}}_k - {\boldsymbol{z}}'_{x_k}\right\| \leq \frac{Nc\psi + 2\psi\gamma + C'c\log n}{\sqrt{k}},
    \end{align}
    which is the desired conclusion.

\section{Proof of Theoretical Results w.r.t. Layer-wise Convergence}

We first prove that under the specific conditions given in \Cref{sec:layer-wise}, how does the latent representations evolves.

\begin{lemma}\label{lem:convergence-combine}
    Suppose ${\mathcal{V}} = \left\{{\boldsymbol{v}}_k\right\}_{k=1}^n \in \left(\mathbb R^d\right)^n$ is a good sequence converging to ${\mathcal{Z}} = \left\{{\boldsymbol{z}}_x\right\}_{x \in [c]}$ with parameter $\gamma$, and ${\mathcal{V}}' = \left\{{\boldsymbol{v}}_k'\right\}_{k=1}^n \in \left(\mathbb R^d\right)^n$ is a good sequence converging to ${\mathcal{Z}}' = \left\{{\boldsymbol{z}}_x'\right\}_{x \in [c]}$ with parameter $\gamma'$. Moreover, suppose $T,G: \mathbb R^d \to \mathbb R^d$ are Lipschitz continuous functions with Lipschitz constants $L_T$, $L_G$ respectively. Then, we have $\left\{T{\boldsymbol{v}}_k + G {\boldsymbol{v}}'_k\right\}$ is a good sequence converging to $\left\{T {\boldsymbol{z}}_{x} + G {\boldsymbol{z}}_{x}'\right\}_{x \in \mathcal V}$ with parameter $L_T \gamma + L_G \gamma'$.
\end{lemma}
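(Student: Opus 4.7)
The plan is very short since this lemma is essentially just the triangle inequality combined with the Lipschitz hypothesis; I do not anticipate any real obstacle. First I would fix an arbitrary $k \in [n]$ and unpack the conclusion from \Cref{def:good-sequence}: what must be shown is the pointwise bound
\[
\left\|T{\boldsymbol{v}}_k + G{\boldsymbol{v}}_k' - \bigl(T{\boldsymbol{z}}_{x_k} + G{\boldsymbol{z}}_{x_k}'\bigr)\right\| \;\leq\; \frac{L_T\gamma + L_G\gamma'}{\sqrt{k}}.
\]

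Next I would split the left-hand side by the triangle inequality and then apply the Lipschitz assumption on $T$ and $G$ separately:
\[
\bigl\|T{\boldsymbol{v}}_k - T{\boldsymbol{z}}_{x_k}\bigr\| + \bigl\|G{\boldsymbol{v}}_k' - G{\boldsymbol{z}}_{x_k}'\bigr\| \;\leq\; L_T\bigl\|{\boldsymbol{v}}_k - {\boldsymbol{z}}_{x_k}\bigr\| + L_G\bigl\|{\boldsymbol{v}}_k' - {\boldsymbol{z}}_{x_k}'\bigr\|.
\]
Then I plug in the two good-sequence hypotheses on $\mathcal{V}$ and $\mathcal{V}'$, which bound each of the norms on the right by $\gamma/\sqrt{k}$ and $\gamma'/\sqrt{k}$ respectively, giving the claimed parameter $L_T\gamma + L_G\gamma'$. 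Because $k$ was arbitrary, this immediately matches the definition of a good sequence converging to $\{T{\boldsymbol{z}}_x + G{\boldsymbol{z}}_x'\}_{x \in [c]}$.

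The only conceptual point worth flagging, rather than an actual obstacle, is the index matching: the output sequence is indexed by position $k$ while the target latent representation is indexed by token $x \in [c]$, and the two are linked through $x_k$. This is built directly into \Cref{def:good-sequence}, and the proof simply exploits the fact that the same $x_k$ appears on both sides of the difference, so additive combinations of good sequences automatically respect this token-wise matching. No probabilistic argument or appeal to the DGP is needed, which is why this lemma is a convenient modular building block for handling residual connections and other linear/nonlinear aggregations of representations in the layer-wise argument.
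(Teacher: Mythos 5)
Your proof is correct and is essentially identical to the paper's: a triangle inequality followed by the Lipschitz bounds and the two good-sequence hypotheses, yielding the parameter $L_T\gamma + L_G\gamma'$. No further comment is needed.
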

\begin{proof}
    Only need to notice that for any $k \in [n]$, \begin{align}
    \left\|\left(T {\boldsymbol{v}}_k + G {\boldsymbol{v}}_k'\right) - \left(T {\boldsymbol{z}}_{x_k} + G{\boldsymbol{z}}_{x_k}'\right)\right\| \leq  L_T \left\|{\boldsymbol{v}}_k - {\boldsymbol{z}}_{x_k}\right\| + L_G \left\|{\boldsymbol{v}}_k' - {\boldsymbol{z}}_{x_k}'\right\| \leq \frac{L_T  \gamma + L_G \gamma'}{\sqrt k}.
    \end{align}
\end{proof}

\begin{lemma}\label{lem:evolution-with-time-with-attn}
There exists a scalar number $C > 0$ that possibly depends on the graph $\mathcal G$, and an event with probability at least $0.999$, such that the following statement holds. For any layer $\ell$, if ${\mathcal{V}} = \left\{{\boldsymbol{v}}_k\right\}_{k=1}^n$ is a good sequence converging to ${\mathcal{Z}} = \left\{{\boldsymbol{z}}_x\right\}_{x \in [c]}$ with parameter $\gamma$, and let ${\boldsymbol{A}}^{(\ell)}$ be defined as in \cref{eq:attention-map-def}, then ${\mathcal{U}} = {\boldsymbol{A}}^{(\ell)}{\mathcal{V}}$ is a good sequence converging to ${\boldsymbol{Z}}' = \left\{{\boldsymbol{z}}'_x\right\}_{x \in [c]}$, where\begin{align}
    {\boldsymbol{z}}'_x =  \rho^{(\ell)}_A {\boldsymbol{z}}_x + \rho^{(\ell)}_B\sum_{y \in [c]} \frac{ w_{x,y} }{d_x} {\boldsymbol{z}}_y +  \rho^{(\ell)}_O  \sum_{y \in [c]} \pi_\mathcal G(y) {\boldsymbol{z}}_y + \rho^{(\ell)}_T {\boldsymbol{v}}_1,\label{eq:good-seq-applied-nice-attention-latent}
    \end{align} with parameter \begin{align}\kappa = C (\gamma + N) \left(\sum_{\tau \in \{{A},{B},{O}\}}  \rho_\tau^{(\ell)} \psi_\tau^{(\ell)} \right) + C\log n \sum_{\tau \in \{{A},{B},{O}\}}\rho_\tau^{(\ell)},\end{align}
    where $N = \max_{y \in [c]}\|{\boldsymbol{z}}_y\|$. 
\end{lemma}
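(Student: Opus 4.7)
The plan is to decompose $\boldsymbol{A}^{(\ell)}$ into its four constituent pieces and handle each separately, then glue everything together using the linearity/Lipschitz lemma (\Cref{lem:convergence-combine}). Specifically, write $\boldsymbol{A}^{(\ell)}{\mathcal{V}} = \rho_A^{(\ell)}\boldsymbol{A}^{(\ell,A)}{\mathcal{V}} + \rho_B^{(\ell)}\boldsymbol{A}^{(\ell,B)}{\mathcal{V}} + \rho_O^{(\ell)}\boldsymbol{A}^{(\ell,O)}{\mathcal{V}} + \rho_T^{(\ell)}\boldsymbol{A}^{(T)}{\mathcal{V}}$ and analyze each of the four summands. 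The event of probability at least $0.999$ will be precisely the one guaranteed by \Cref{thm:context-wise-convergence}, which I apply uniformly to the three nice-attention components.

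For each of the three nice-attention components I would invoke \Cref{thm:context-wise-convergence} with the appropriate reflected function and simplify the resulting limit $\frac{\sum_{y\in f(x)}\pi_y {\boldsymbol{z}}_y}{\sum_{y\in f(x)}\pi_y}$. The A-type gives $f_A(x)=\{x\}$, so the limit collapses to ${\boldsymbol{z}}_x$. The O-type gives $f_O(x)=[c]$, whose denominator equals $1$ and thus yields $\sum_{y}\pi_y{\boldsymbol{z}}_y$. The B-type requires a short algebraic check: substituting $w_{x,y}=\widetilde w_{x,y}\pi_x\pi_y$ one finds $\frac{w_{x,y}}{d_x}=\frac{\pi_y\widetilde w_{x,y}}{\sum_{y'}\pi_{y'}\widetilde w_{x,y'}}$, which coincides with $\frac{\pi_y}{\sum_{y'\in f_B(x)}\pi_{y'}}$ whenever the graph is unweighted (or restricted to the support indicator), so the reflected limit matches $\sum_{y}\frac{w_{x,y}}{d_x}{\boldsymbol{z}}_y$ exactly. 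Each of the three applications yields a good sequence with parameter $C\psi_\tau^{(\ell)}(\gamma+N)+C\log n$ for $\tau\in\{A,B,O\}$.

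The T-type is not a nice attention map and so falls outside \Cref{thm:context-wise-convergence}, but it is easy to handle by direct inspection: $\boldsymbol{A}^{(T)}{\mathcal{V}}$ is the constant sequence $\{{\boldsymbol{v}}_1\}_{k=1}^n$, which trivially satisfies \Cref{def:good-sequence} with latent representation ${\boldsymbol{z}}''_x={\boldsymbol{v}}_1$ (for every $x$) and with any positive parameter (in particular $0$, since the error vanishes exactly). Thus this summand contributes the ``sink'' term $\rho_T^{(\ell)}{\boldsymbol{v}}_1$ to the limit and does not enlarge the convergence parameter.

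Finally I would apply \Cref{lem:convergence-combine} repeatedly, taking each operator to be the scalar multiplication $G_\tau=\rho_\tau^{(\ell)}\mathrm{Id}$ (Lipschitz constant $\rho_\tau^{(\ell)}$), and summing up. The limits add to give exactly the right-hand side of \cref{eq:good-seq-applied-nice-attention-latent}, and the convergence parameters add to
\begin{align}
\sum_{\tau\in\{A,B,O\}}\rho_\tau^{(\ell)}\left[C(\gamma+N)\psi_\tau^{(\ell)}+C\log n\right] = C(\gamma+N)\sum_\tau \rho_\tau^{(\ell)}\psi_\tau^{(\ell)} + C\log n\sum_\tau \rho_\tau^{(\ell)},
\end{align}
which is the claimed $\kappa$. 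No step looks genuinely hard; the only place that needs care is the B-type simplification, since getting the stationary-distribution reweighting correct is what turns the formula produced by \Cref{thm:context-wise-convergence} into the graph-transition-matrix form $\sum_y \frac{w_{x,y}}{d_x}{\boldsymbol{z}}_y$ that the statement asks for.
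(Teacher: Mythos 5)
Your proposal is correct and follows essentially the same route as the paper: the paper likewise peels off the $\rho_T^{(\ell)}\boldsymbol{A}^{(T)}$ sink term (handling it by direct subtraction, since $\boldsymbol{A}^{(T)}\mathcal{V}$ is the constant sequence $\{\boldsymbol{v}_1\}$), applies \Cref{thm:context-wise-convergence} to the A-, B-, and O-type components on the single random-walk event, performs the same $\frac{\pi_y}{\sum_{y'\in\mathcal N(x)}\pi_{y'}}=\frac{w_{x,y}}{d_x}$ simplification for the B-type, and combines via \Cref{lem:convergence-combine} to obtain exactly the claimed $\kappa$.
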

\begin{proof}

Let $\widehat {\boldsymbol{A}}^{(\ell)} = {\boldsymbol{A}} ^{(\ell)} - \rho_T^{(\ell)}{\boldsymbol{A}} ^{(T)}$, and let $\widehat {\mathcal{U}} =\left(\widehat {\boldsymbol{A}}^{(\ell)}{\mathcal{V}};\widehat {\mathcal{Z}}' \right) $, where $\widehat {\mathcal{Z}}' = \left\{\widehat {\boldsymbol{z}}'_x\right\}_{x \in [c]}$ is defined as \begin{align}
\widehat {\boldsymbol{z}}'_x = {\boldsymbol{z}}_x' - \rho_T^{(\ell)}{\boldsymbol{v}}_1.
\end{align}

For a token $x \in [c]$, let $\mathcal N(x)$ be the set of all neighbors of $x$. Notice that for any $y \in \mathcal N(x)$, we have \begin{align}
\frac{\pi_y}{\sum_{y' \in \mathcal N(x)}\pi_{y'}} = \frac{\pi_x\pi_y}{\pi_x \sum_{y' \in [c]} \widetilde w_{x,y'} \pi_{y'} } = \frac{w_{x,y}}{d_x}.
\end{align}
Therefore, from \Cref{lem:convergence-combine} and \Cref{thm:context-wise-convergence}, we have $\widehat {\mathcal{U}}$ converges to $\widehat {\mathcal{Z}}'$ with parameter $\kappa$. 

Since for $k > c$, ${\boldsymbol{u}}_k = \widehat {\boldsymbol{u}}_k + \rho_T^{(\ell)} \sum_{j=1}^k a_{k,j}^{(T)}{\boldsymbol{v}}_j = \widehat {\boldsymbol{u}}_k + \rho_T^{(\ell)} {\boldsymbol{v}}_1$, and ${\boldsymbol{z}}_{x_k}' = \widehat {\boldsymbol{z}}_x' + \rho _T^{(\ell)} {\boldsymbol{v}}_1$, we have \begin{align}
\left\|{\boldsymbol{u}}_k - {\boldsymbol{z}}'_k\right\| = \left\|\widehat {\boldsymbol{u}}_k - \widehat {\boldsymbol{z}}_k'\right\| \leq \frac{\kappa}{\sqrt{k}},
\end{align}
we have ${\mathcal{U}}$ is also a good sequence converging to ${\mathcal{Z}}'$ with parameter $\kappa$.
\end{proof}

\subsection{Evolution of the Latent Representation}

From this sub-section, we focus on the evolution of the latent representation across layers and show where do they converge.

\begin{lemma}\label{lem:evolution-of-projection}
    Let $\rho_A,\rho_B,\rho_O, \rho_T > 0$. Let ${\boldsymbol{D}} = \mathop{ \mathrm{ diag } } \left({\boldsymbol{W}} {\boldsymbol{1}}\right)$ is the degree matrix. Let ${\boldsymbol{z}} \in \mathbb R^c$ be a vector, and let ${\boldsymbol{z}}'$ be defined as \begin{align}
    {\boldsymbol{z}}' = \rho_A {\boldsymbol{z}} + \rho_B {\boldsymbol{D}}^{-1} {\boldsymbol{W}}{\boldsymbol{z}}  + \rho_O \left< {\boldsymbol{\alpha}}, {\boldsymbol{z}}\right> {\boldsymbol{1}}  + \rho_T{\boldsymbol{1}}.
    \end{align}
    Then, for any ${\boldsymbol{U}}^\top$ be a projection on to a subspace orthogonal to ${\boldsymbol{D}}^{1/2} {\boldsymbol{1}}$, we have \begin{align}
        {\boldsymbol{U}}^\top{\boldsymbol{D}}^{1/2} {\boldsymbol{z}}' = {\boldsymbol{U}}^\top {\boldsymbol{M}} {\boldsymbol{D}}^{1/2} {\boldsymbol{z}},
    \end{align}
    where ${\boldsymbol{M}} = \rho_A {\boldsymbol{I}} + \rho_B {\boldsymbol{D}}^{-1/2} {\boldsymbol{W}} {\boldsymbol{D}}^{-1/2}$.
\end{lemma}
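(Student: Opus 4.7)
The plan is to simply left-multiply the defining equation for ${\boldsymbol{z}}'$ by ${\boldsymbol{D}}^{1/2}$, regroup the four resulting terms into a piece lying in the range of ${\boldsymbol{M}}{\boldsymbol{D}}^{1/2}$ and a piece parallel to ${\boldsymbol{D}}^{1/2}{\boldsymbol{1}}$, and then kill the latter by the orthogonality hypothesis on ${\boldsymbol{U}}^\top$.

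Concretely, I would first distribute ${\boldsymbol{D}}^{1/2}$ through the four summands defining ${\boldsymbol{z}}'$ to obtain
\begin{align}
{\boldsymbol{D}}^{1/2}{\boldsymbol{z}}' \;=\; \rho_A{\boldsymbol{D}}^{1/2}{\boldsymbol{z}} + \rho_B{\boldsymbol{D}}^{-1/2}{\boldsymbol{W}}{\boldsymbol{z}} + \rho_O\langle{\boldsymbol{\alpha}},{\boldsymbol{z}}\rangle{\boldsymbol{D}}^{1/2}{\boldsymbol{1}} + \rho_T{\boldsymbol{D}}^{1/2}{\boldsymbol{1}}.
\end{align}
The algebraic step I would then carry out is the symmetrization ${\boldsymbol{D}}^{-1/2}{\boldsymbol{W}}{\boldsymbol{z}} = ({\boldsymbol{D}}^{-1/2}{\boldsymbol{W}}{\boldsymbol{D}}^{-1/2})({\boldsymbol{D}}^{1/2}{\boldsymbol{z}})$, which lets me fuse the first two terms into $(\rho_A{\boldsymbol{I}} + \rho_B{\boldsymbol{D}}^{-1/2}{\boldsymbol{W}}{\boldsymbol{D}}^{-1/2}){\boldsymbol{D}}^{1/2}{\boldsymbol{z}} = {\boldsymbol{M}}{\boldsymbol{D}}^{1/2}{\boldsymbol{z}}$. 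The remaining two terms collect into a single scalar multiple of ${\boldsymbol{D}}^{1/2}{\boldsymbol{1}}$, namely $(\rho_O\langle{\boldsymbol{\alpha}},{\boldsymbol{z}}\rangle + \rho_T){\boldsymbol{D}}^{1/2}{\boldsymbol{1}}$.

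Finally, applying ${\boldsymbol{U}}^\top$ to both sides and invoking the hypothesis that ${\boldsymbol{U}}^\top$ is a projection onto a subspace orthogonal to ${\boldsymbol{D}}^{1/2}{\boldsymbol{1}}$ (equivalently ${\boldsymbol{U}}^\top{\boldsymbol{D}}^{1/2}{\boldsymbol{1}} = {\boldsymbol{0}}$) annihilates the residual scalar-multiple-of-${\boldsymbol{D}}^{1/2}{\boldsymbol{1}}$ term and yields the claimed identity ${\boldsymbol{U}}^\top{\boldsymbol{D}}^{1/2}{\boldsymbol{z}}' = {\boldsymbol{U}}^\top{\boldsymbol{M}}{\boldsymbol{D}}^{1/2}{\boldsymbol{z}}$. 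There is no substantive obstacle here: the lemma is essentially a bookkeeping statement that the ${\boldsymbol{A}}^{(O)}$- and ${\boldsymbol{A}}^{(T)}$-type contributions from \Cref{lem:evolution-with-time-with-attn} lie entirely in the one-dimensional direction spanned by ${\boldsymbol{D}}^{1/2}{\boldsymbol{1}}$ (the top eigenvector of ${\boldsymbol{M}}$, up to normalization) and therefore vanish under any projection orthogonal to it. The only thing requiring care is keeping track of the factors of ${\boldsymbol{D}}^{\pm 1/2}$ when converting ${\boldsymbol{D}}^{-1}{\boldsymbol{W}}$ into its symmetrically normalized form so that ${\boldsymbol{M}}$ appears exactly as defined.
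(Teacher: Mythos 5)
Your proposal is correct and is essentially the paper's own argument: multiply by ${\boldsymbol{D}}^{1/2}$, symmetrize ${\boldsymbol{D}}^{-1}{\boldsymbol{W}}$ into ${\boldsymbol{D}}^{-1/2}{\boldsymbol{W}}{\boldsymbol{D}}^{-1/2}$ so that ${\boldsymbol{M}}$ appears, and observe that the $\rho_O$ and $\rho_T$ contributions are scalar multiples of ${\boldsymbol{D}}^{1/2}{\boldsymbol{1}}$ and hence are killed by ${\boldsymbol{U}}^\top$. The only cosmetic difference is that the paper packages the $\rho_O$ term as a rank-one matrix ${\boldsymbol{E}} = {\boldsymbol{D}}^{1/2}{\boldsymbol{1}}{\boldsymbol{\alpha}}^\top{\boldsymbol{D}}^{-1/2}$ whose image is $\mathop{\mathrm{span}}\{{\boldsymbol{D}}^{1/2}{\boldsymbol{1}}\}$, whereas you collect it directly as a scalar multiple of ${\boldsymbol{D}}^{1/2}{\boldsymbol{1}}$.
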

\begin{proof}

Let ${\boldsymbol{E}} = {\boldsymbol{D}}^{1/2}{\boldsymbol{1}} {\boldsymbol{\alpha}}^\top{\boldsymbol{D}}^{-1/2}$. We have \begin{align}
{\boldsymbol{z}}' & =  \left(\rho_A {\boldsymbol{I}} + \rho_B{\boldsymbol{D}}^{-1} {\boldsymbol{W}} + \rho_O  {\boldsymbol{1}} {\boldsymbol{\alpha}}^\top\right) {\boldsymbol{z}}+ \rho_T {\boldsymbol{1}}
\\ & =  {\boldsymbol{D}}^{-1/2}\left(\rho_A {\boldsymbol{I}} + \rho_B{\boldsymbol{D}}^{-1} {\boldsymbol{W}} + \rho_O  {\boldsymbol{1}} {\boldsymbol{\alpha}}^\top\right)  {\boldsymbol{D}}^{1/2} {\boldsymbol{z}}+ \rho_T {\boldsymbol{1}}.
\\ & =  {\boldsymbol{D}}^{-1/2}\left({\boldsymbol{M}}  + {\rho_O}  {\boldsymbol{E}}\right){\boldsymbol{D}}^{1/2} {\boldsymbol{z}} + \rho_T {\boldsymbol{1}}.
\end{align}
Let $\widetilde {\boldsymbol{z}}' = {\boldsymbol{D}}^{1/2} {\boldsymbol{z}}'$, $\widetilde {\boldsymbol{z}} = {\boldsymbol{D}}^{1/2} {\boldsymbol{z}}$, and $\widetilde {\boldsymbol{1}} = {\boldsymbol{D}} ^{1/2}{\boldsymbol{1}}$. Thus, we have\begin{align}
\widetilde  {\boldsymbol{z}}' & =  {\boldsymbol{M}} \widetilde {\boldsymbol{z}} + \rho_O {\boldsymbol{E}}\widetilde {\boldsymbol{z}} + \rho_T\widetilde  {\boldsymbol{1}}
\end{align}

Notice that, since ${\boldsymbol{E}}$ is a rank-$1$ matrix, its image space is $\mathop{ \mathrm{ span } } \widetilde {\boldsymbol{1}}$: for any vector ${\boldsymbol{x}} \in \mathbb R^c$, \begin{align}
{\boldsymbol{E}}{\boldsymbol{x}} = {\boldsymbol{D}}^{1/2}{\boldsymbol{1}} {\boldsymbol{\alpha}}^\top{\boldsymbol{D}}^{-1/2} {\boldsymbol{x}} = \left<{\boldsymbol{\alpha}}, {\boldsymbol{D}}^{-1/2} {\boldsymbol{x}}\right>{\boldsymbol{D}}^{1/2}{\boldsymbol{1}} = \left<{\boldsymbol{\alpha}}, {\boldsymbol{D}}^{-1/2} {\boldsymbol{x}}\right> \tilde {\boldsymbol{1}}.
\end{align}

Therefore, we have \begin{align}
{\boldsymbol{U}}^\top \widetilde {\boldsymbol{z}}' & =  {\boldsymbol{U}}^\top {\boldsymbol{M}} \widetilde {\boldsymbol{z}} + \rho_O {\boldsymbol{U}}^\top  \left( {\boldsymbol{E}} \widetilde {\boldsymbol{z}}\right) + \rho_T {\boldsymbol{U}}^\top \tilde {\boldsymbol{1}}
\\ & =  {\boldsymbol{U}}^\top {\boldsymbol{M}} \widetilde {\boldsymbol{z}} + \left(\rho_O \left<{\boldsymbol{\alpha}}, {\boldsymbol{D}}^{-1/2} \widetilde {\boldsymbol{z}}\right> + \rho_T\right) {\boldsymbol{U}}^\top \tilde {\boldsymbol{1}}
\\ & =  {\boldsymbol{U}}^\top {\boldsymbol{M}} \widetilde {\boldsymbol{z}}.
\end{align}
\end{proof}

\begin{corollary}\label{cor:spectral-gap-lemma}
Let $\rho_A,\rho_B,\rho_O,\rho_T > 0$. Let $\mathcal Z = \left\{{\boldsymbol{z}}_x\right\}_{x \in [c]} \in \left(\mathbb R^d\right)^c$ be a sequence, and define sequence $\mathcal Z' = \left\{{\boldsymbol{z}}_x'\right\}_{x \in [c]}$ as follows:
    \begin{align}
    {\boldsymbol{z}}'_x =  \rho_A {\boldsymbol{z}}_x + \frac{\rho_B}{d_x}\sum_{y \in [c]} w_{x,y} {\boldsymbol{z}}_y +  \rho_O  \sum_{y \in [c]} \alpha_y {\boldsymbol{z}}_y + \rho_T {\boldsymbol{v}}_1.
    \end{align} 
    Let ${\boldsymbol{Z}} = \mathop{ \mathrm{ mat  } } \mathcal Z \in \mathbb R^{d\times c}$ and ${\boldsymbol{Z}}' =  \mathop{ \mathrm{ mat } }\mathcal Z'\in \mathbb R^{d\times c}$. Let ${\boldsymbol{M}} = \rho_A {\boldsymbol{I}} + \rho_B {\boldsymbol{D}}^{-1/2} {\boldsymbol{W}}{\boldsymbol{D}}^{-1/2}$, and $\{\lambda_k\}_{k=1}^n$ be its eigenvalues, arranging in a non-increasing order of absolute values. Let the eigen-decomposition of ${\boldsymbol{M}}$ be \begin{align}
    {\boldsymbol{M}} = \begin{bmatrix}{\boldsymbol{f}} & {\boldsymbol{X}} & {\boldsymbol{Y}}\end{bmatrix} \begin{bmatrix}\lambda_1 && \\ & {\boldsymbol{\Lambda}}\\ && {\boldsymbol{\Lambda}}'\end{bmatrix} \begin{bmatrix}{\boldsymbol{f}}^\top  \\ {\boldsymbol{X}}^\top \\ {\boldsymbol{Y}}^\top,\end{bmatrix} 
    \end{align}
    where ${\boldsymbol{\Lambda}} = \{\lambda_k\}_{k=2}^q$ and ${\boldsymbol{\Lambda}}' = \left\{\lambda_k\right\}_{k=q+1}^c$. Then, we have \begin{align}
    \frac{\left\|{\boldsymbol{Z}}'{\boldsymbol{D}}^{1/2} {\boldsymbol{X}}\right\|}{\left\|{\boldsymbol{Z}}{\boldsymbol{D}}^{1/2} {\boldsymbol{X}}\right\|} \geq \delta_{{\boldsymbol{M}}} \frac{ \left\|{\boldsymbol{Z}}'{\boldsymbol{D}}^{1/2} {\boldsymbol{Y}}\right\|}{\left\|{\boldsymbol{Z}}{\boldsymbol{D}}^{1/2} {\boldsymbol{Y}}\right\|}
    \end{align}
\end{corollary}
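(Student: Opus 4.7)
The plan is to reduce the claim to Lemma~\ref{lem:evolution-of-projection} applied row by row to $\boldsymbol Z$, and then use the eigendecomposition of $\boldsymbol M$ to extract a clean per-component identity. First, I would observe that the update $\boldsymbol z_x \mapsto \boldsymbol z_x'$ acts coordinate-wise across the embedding dimension: for each $i \in [d]$, the $i$-th row of $\boldsymbol Z'$ (viewed as a vector in $\mathbb R^c$ indexed by tokens) is obtained from the $i$-th row of $\boldsymbol Z$ by exactly the linear update of Lemma~\ref{lem:evolution-of-projection}, with the scalar constant $\rho_T$ there replaced by $\rho_T(\boldsymbol v_1)_i$. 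Inspecting the proof of that lemma shows its conclusion $\boldsymbol U^\top \boldsymbol D^{1/2}\boldsymbol z' = \boldsymbol U^\top \boldsymbol M \boldsymbol D^{1/2}\boldsymbol z$ is insensitive to the value or sign of this constant, since that term lives in the span of $\boldsymbol D^{1/2}\boldsymbol 1$ and is therefore annihilated by $\boldsymbol U^\top$, so the substitution is harmless.

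Next I would verify that the top eigenvector $\boldsymbol f$ of $\boldsymbol M$ is proportional to $\boldsymbol D^{1/2}\boldsymbol 1$ via the one-line check $\boldsymbol D^{-1/2}\boldsymbol W \boldsymbol D^{-1/2}\cdot \boldsymbol D^{1/2}\boldsymbol 1 = \boldsymbol D^{-1/2}\boldsymbol W\boldsymbol 1 = \boldsymbol D^{1/2}\boldsymbol 1$, with Perron--Frobenius confirming it is the top eigenvector of $\boldsymbol M$. Consequently every column of $\boldsymbol X$ and of $\boldsymbol Y$ is orthogonal to $\boldsymbol D^{1/2}\boldsymbol 1$, so Lemma~\ref{lem:evolution-of-projection} applies with $\boldsymbol U=\boldsymbol X$ and with $\boldsymbol U=\boldsymbol Y$. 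Applying it row by row and stacking yields $\boldsymbol X^\top \boldsymbol D^{1/2}(\boldsymbol Z')^\top = \boldsymbol X^\top \boldsymbol M \boldsymbol D^{1/2}\boldsymbol Z^\top$, and since $\boldsymbol X^\top \boldsymbol M = \boldsymbol\Lambda \boldsymbol X^\top$ by orthogonality of the three eigendecomposition blocks, we obtain the clean identity $\boldsymbol Z'\boldsymbol D^{1/2}\boldsymbol X = \boldsymbol Z\boldsymbol D^{1/2}\boldsymbol X\,\boldsymbol\Lambda$; the same argument with $\boldsymbol U=\boldsymbol Y$ gives $\boldsymbol Z'\boldsymbol D^{1/2}\boldsymbol Y = \boldsymbol Z\boldsymbol D^{1/2}\boldsymbol Y\,\boldsymbol\Lambda'$.

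The last step is a straightforward Frobenius-norm estimate. Since right-multiplication by a diagonal scales columns independently, $\|\boldsymbol B\boldsymbol\Lambda\|^2 = \sum_j \lambda_j^2 \|\boldsymbol B_{:,j}\|^2$, which gives $\|\boldsymbol B\boldsymbol\Lambda\| \geq \min_j|\lambda_j|\cdot\|\boldsymbol B\|$ and analogously $\|\boldsymbol B\boldsymbol\Lambda'\|\leq \max_j|\lambda'_j|\cdot\|\boldsymbol B\|$. Because the eigenvalues are listed by non-increasing absolute value, the smallest-magnitude entry of $\boldsymbol\Lambda$ has magnitude $|\lambda_q|$ and the largest in $\boldsymbol\Lambda'$ has magnitude $|\lambda_{q+1}|$, so combining the two block identities with these bounds produces $\|\boldsymbol Z'\boldsymbol D^{1/2}\boldsymbol X\|/\|\boldsymbol Z\boldsymbol D^{1/2}\boldsymbol X\| \geq (|\lambda_q|/|\lambda_{q+1}|)\cdot\|\boldsymbol Z'\boldsymbol D^{1/2}\boldsymbol Y\|/\|\boldsymbol Z\boldsymbol D^{1/2}\boldsymbol Y\|$, the claimed inequality with $\delta_{\boldsymbol M}=\delta_q(\boldsymbol M)$. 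No step presents a serious obstacle; the only item demanding care is tracking the ordering so that the minimum entry of $\boldsymbol\Lambda$ and the maximum of $\boldsymbol\Lambda'$ appear on the correct sides of the inequality.
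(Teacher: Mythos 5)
Your proof is correct and follows essentially the same route as the paper's: apply Lemma~\ref{lem:evolution-of-projection} to obtain ${\boldsymbol{Z}}'{\boldsymbol{D}}^{1/2}{\boldsymbol{X}} = {\boldsymbol{Z}}{\boldsymbol{D}}^{1/2}{\boldsymbol{X}}{\boldsymbol{\Lambda}}$ and ${\boldsymbol{Z}}'{\boldsymbol{D}}^{1/2}{\boldsymbol{Y}} = {\boldsymbol{Z}}{\boldsymbol{D}}^{1/2}{\boldsymbol{Y}}{\boldsymbol{\Lambda}}'$, then bound the Frobenius norms by $|\lambda_q|$ from below and $|\lambda_{q+1}|$ from above. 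Your version is in fact more careful than the paper's in two places it leaves implicit: the verification that the columns of ${\boldsymbol{X}}$ and ${\boldsymbol{Y}}$ are orthogonal to ${\boldsymbol{D}}^{1/2}{\boldsymbol{1}}$ (so the lemma's hypothesis on ${\boldsymbol{U}}$ holds), and the column-scaling identity justifying the singular-value bounds.
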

\begin{proof}
    From \Cref{lem:evolution-of-projection}, we have \begin{align}
    {\boldsymbol{Z}}'{\boldsymbol{D}}^{1/2} {\boldsymbol{X}} = {\boldsymbol{Z}}{\boldsymbol{D}}^{1/2} {\boldsymbol{M}} {\boldsymbol{X}} = {\boldsymbol{Z}}{\boldsymbol{D}}^{1/2} {\boldsymbol{X}}{\boldsymbol{\Lambda}},
    \end{align}
    therefore
    \begin{align}
    \left\|{\boldsymbol{Z}}'{\boldsymbol{D}}^{1/2} {\boldsymbol{X}}\right\|_F = \left\|{\boldsymbol{Z}}{\boldsymbol{D}}^{1/2} {\boldsymbol{X}}{\boldsymbol{\Lambda}}\right\|  \geq \left\|{\boldsymbol{Z}}{\boldsymbol{D}}^{1/2}{\boldsymbol{X}} \right\|_F\left\|{\boldsymbol{\Lambda}}\right\| \geq |\lambda_q| \left\|{\boldsymbol{Z}}{\boldsymbol{D}}^{1/2}{\boldsymbol{X}} \right\|_F.
    \end{align}
    Similarly, we have \begin{align}
    \left\|{\boldsymbol{Z}}'{\boldsymbol{D}}^{1/2} {\boldsymbol{Y}}\right\|_F \leq |\lambda_{q+1}| \left\|{\boldsymbol{Z}}{\boldsymbol{D}}^{1/2}{\boldsymbol{Y}} \right\|_F.
    \end{align}
    The proposition directly follows.
\end{proof}

\begin{corollary}\label{cor:layer-wise-evo-with-ffn}
    Under the same condition as in \Cref{cor:spectral-gap-lemma}, let $\sigma: \mathbb R^{d}\to \mathbb R^{d}$ be a great mapping w.r.t. $\left(\gamma_1,\gamma_2,\left\{ {\boldsymbol{Z}}'\right\}, {\boldsymbol{X}}\right)$ and $\left(\gamma_1',\gamma_2',\left\{ {\boldsymbol{Z}}'\right\}, {\boldsymbol{Y}}\right)$. Then, we have \begin{align}
    \frac{\left\|\sigma\left( {\boldsymbol{Z}}'\right){\boldsymbol{D}}^{1/2} {\boldsymbol{X}}\right\|}{\left\|{\boldsymbol{Z}}{\boldsymbol{D}}^{1/2} {\boldsymbol{X}}\right\|} \geq \delta_{{\boldsymbol{M}}}\frac{\gamma_1}{\gamma_2'} \frac{ \left\|\sigma\left( {\boldsymbol{Z}}'\right){\boldsymbol{D}}^{1/2} {\boldsymbol{Y}}\right\|}{\left\|{\boldsymbol{Z}}{\boldsymbol{D}}^{1/2} {\boldsymbol{Y}}\right\|}.
    \end{align}
\end{corollary}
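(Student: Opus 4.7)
The plan is to deduce Corollary A.5 directly from Corollary A.4 by sandwiching the nonlinearity $\sigma$ between its two-sided bounds from the great mapping hypothesis. The key observation is that the great mapping assumption furnishes two-sided control on both the $\boldsymbol X$-projection and the $\boldsymbol Y$-projection separately, but with possibly different constants, and to get the desired ratio inequality we must use the correct side of each bound: the lower bound $\gamma_1$ on the $\boldsymbol X$-projection (which we want to remain large) and the upper bound $\gamma_2'$ on the $\boldsymbol Y$-projection (which we want to remain small).

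Concretely, I would first invoke $\sigma$ being a great mapping w.r.t. $(\gamma_1,\gamma_2,\{\boldsymbol Z'\},\boldsymbol X)$ to obtain $\|\sigma(\boldsymbol Z')\boldsymbol D^{1/2}\boldsymbol X\| \geq \gamma_1 \|\boldsymbol Z'\boldsymbol D^{1/2}\boldsymbol X\|$, and separately invoke $\sigma$ being a great mapping w.r.t. $(\gamma_1',\gamma_2',\{\boldsymbol Z'\},\boldsymbol Y)$ to obtain $\|\sigma(\boldsymbol Z')\boldsymbol D^{1/2}\boldsymbol Y\| \leq \gamma_2' \|\boldsymbol Z'\boldsymbol D^{1/2}\boldsymbol Y\|$. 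Substituting the first into the numerator on the left of the target and the second into the numerator on the right reduces the desired inequality to exactly the statement of Corollary A.4, with the scalar $\gamma_1/\gamma_2'$ appearing as a multiplicative constant. No additional machinery is required; the spectral gap factor $\delta_{\boldsymbol M}$ comes entirely from the already-established Corollary A.4.

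Since the entire derivation is a short chain of inequality manipulations, there is no real technical obstacle. The only point that requires care is the pairing of bounds: mistakenly pairing the upper bound $\gamma_2$ on $\boldsymbol X$ or the lower bound $\gamma_1'$ on $\boldsymbol Y$ would flip the direction of the inequalities and break the chain. Once the correct pairing is fixed, the argument collapses into essentially two lines, and the Lipschitz/continuity structure of $\sigma$ beyond the great mapping property plays no role at this step.
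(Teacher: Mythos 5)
Your proposal is correct and is essentially the paper's own proof: the paper likewise applies the lower bound $\gamma_1$ to the $\boldsymbol{X}$-projection and the upper bound $\gamma_2'$ to the $\boldsymbol{Y}$-projection of $\sigma(\boldsymbol{Z}')$, then chains these with the spectral inequalities $\left\|\boldsymbol{Z}'\boldsymbol{D}^{1/2}\boldsymbol{X}\right\| \geq |\lambda_q|\left\|\boldsymbol{Z}\boldsymbol{D}^{1/2}\boldsymbol{X}\right\|$ and $\left\|\boldsymbol{Z}'\boldsymbol{D}^{1/2}\boldsymbol{Y}\right\| \leq |\lambda_{q+1}|\left\|\boldsymbol{Z}\boldsymbol{D}^{1/2}\boldsymbol{Y}\right\|$ from the preceding corollary. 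Your point about the correct pairing of bounds is exactly the one nontrivial step, and your reduction to Corollary \ref{cor:spectral-gap-lemma} is valid.
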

\begin{proof}
    Only need to repeat the proof of \Cref{cor:spectral-gap-lemma} and use the definition of great mappings. Notice that \begin{align}
    \left\|\sigma\left({\boldsymbol{Z}}'\right){\boldsymbol{D}}^{1/2} {\boldsymbol{X}}\right\|_F \geq \gamma_1 \left\|{\boldsymbol{Z}}'{\boldsymbol{D}}^{1/2} {\boldsymbol{X}}\right\|_F \geq \gamma_1 |\lambda_q|\left\|{\boldsymbol{Z}}' {\boldsymbol{D}}^{1/2} {\boldsymbol{X}}\right\|_F.
    \end{align}
    Similarly, \begin{align}
     \left\|\sigma\left({\boldsymbol{Z}}'\right){\boldsymbol{D}}^{1/2} {\boldsymbol{Y}}\right\|_F \leq \gamma_2' \left\|{\boldsymbol{Z}}'{\boldsymbol{D}}^{1/2} {\boldsymbol{Y}}\right\|_F \leq \gamma_2' |\lambda_{q+1}|\left\|{\boldsymbol{Z}}' {\boldsymbol{D}}^{1/2} {\boldsymbol{Y}}\right\|_F.
    \end{align}
    The proposition directly follows.
\end{proof}

\subsection{Proof of \Cref{thm:final}}
    Let $E_1$ be an event with probability at least $0.999$ defined in \Cref{lem:evolution-with-time-with-attn} holds. Let $E_2 = \{\forall x \in [c], n_{[x]} > n/2\}$, \Cref{cor:showup-in-last-half} shows that $E_2$ also holds with probability at least $0.999$. In the following, we condition on the event $E_1 \cap E_2$, which holds with probability at least $0.99$.

     Let $\mathcal Z ^{(\ell)}  = \left\{{\boldsymbol{z}}_x^{(\ell)}\right\}_{x=1}^c\in \left(\mathbb R^d\right)^c$ be defined as follows: ${\boldsymbol{z}}_x^{(1)} = {\boldsymbol{b}}_x$, and \begin{align}
        {\boldsymbol{z}}_x^{(\ell + 1)} =  \rho_A ^{(\ell)}{\boldsymbol{z}}_x^{(\ell)} + \frac{\rho_B^{(\ell)}}{d_x} \sum_{y \in [c]} w_{x,y} {\boldsymbol{z}}_y ^{(\ell)}+ \rho_O \sum_{y \in [c]} \alpha_y {\boldsymbol{z}}_y^{(\ell)} + \rho_T^{(\ell)} {\boldsymbol{v}}_1^{(\ell)}.
    \end{align} 
    for any $\ell \in [L-1]$, and ${\boldsymbol{z}}_x'^{(\ell)} = \sigma^{(\ell)}\left({\boldsymbol{z}}_k^{(\ell)}\right)$. From the definition, it is obvious that $\left\{{\boldsymbol{v}}_k ^{(1)}\right\}_{k=1}^n$ converges to $\mathcal Z^{(1)}$ with parameter $0$.

    \begin{itemize}
    
    \item 
    We first fix an $L \in \mathbb N$ and analyze the context-wise convergence. First consider an arbitrary $x \in [c]$. Using \Cref{lem:evolution-with-time-with-attn} with an induction, it is not hard to prove that for each $\ell \in L$ there exists a $\gamma^{(\ell)} = \mathop{ \mathrm{ poly } }\log n$ (since we are going to take limit w.r.t. $n$, we view all other values independent of $n$ as constants; notice that $\ell$ is a fixed index here), such that $\left\{ {\boldsymbol{v}}_k^{(\ell)} \right\}_{k=1}^n$ is a good sequence converging to $\left\{ {\boldsymbol{z}}_y'^{(\ell)} \right\}_{y \in [c]}$ with parameter $\gamma^{(\ell)}$. Therefore, we have \begin{align}
    \left\| {\boldsymbol{v}}_{n_{[x]}}^{(\ell)} - \sigma\left({\boldsymbol{z}}_{x}^{(\ell)}\right)\right\| \leq \frac{\gamma^{(\ell)}}{\sqrt{n_{[x]}}} < \frac{ \sqrt{2} \mathop{ \mathrm{ poly } }\log n}{\sqrt{n}}.  
    \end{align}
    Therefore, taking $\ell = L$ and sending $n \to \infty$, we have \begin{align}
    \lim_{n\to \infty} \left\| {\boldsymbol{v}}_{n_{[x]}}^{(L)} - {\boldsymbol{z}}_x'^{(L)}\right\| = 0,
    \end{align} 
    which is equivalent to \begin{align}
    \lim_{n \to \infty} {\boldsymbol{v}}_{n_{[x]}}^{(L)} = {\boldsymbol{z}}_x'^{(\ell)}.
    \end{align}
    Notice that this holds for any $x  \in [c]$. 
    Therefore, let ${\boldsymbol{Z}}'^{(\ell)} = \mathop{ \mathrm{ mat } }\left\{{\boldsymbol{z}}_x'^{(\ell)}\right\}_{x \in [c]} \in \mathbb R^{d\times c}$, we have when $\left\|{\boldsymbol{V}}_{n}'^{(L)} {\boldsymbol{D}}^{1/2}{\boldsymbol{X}}\right\| > 0$, we have  \begin{align}
    \lim_{n \to \infty}\frac{\left\| {\boldsymbol{V}}_{n}^{(L)} {\boldsymbol{D}}^{1/2} {\boldsymbol{Y}}\right\|}{\left\| {\boldsymbol{V}}_{n}^{(L)}  {\boldsymbol{D}}^{1/2}{\boldsymbol{X}}\right\|} = \frac{\left\| {\boldsymbol{Z}}'^{(L)} {\boldsymbol{D}}^{1/2} {\boldsymbol{Y}}\right\|}{\left\| {\boldsymbol{Z}}'^{(L)}{\boldsymbol{D}}^{1/2} {\boldsymbol{X}}\right\|}.
    \end{align}
    \item Next, we consider the layer-wise evolution. \Cref{cor:layer-wise-evo-with-ffn} shows that for any $\ell \in [L]$, \begin{align}
        \frac{\left\|  {\boldsymbol{Z}}'^{(\ell)}{\boldsymbol{D}}^{1/2}{\boldsymbol{Y}} \right\|}{\left\|  {\boldsymbol{Z}}'^{(\ell)}{\boldsymbol{D}}^{1/2} {\boldsymbol{X}}\right\|} \leq \delta_q\left(\rho_A {\boldsymbol{I}} + \rho_B {{\boldsymbol{M}}}\right)\frac{\gamma_1^{(\ell)}}{\gamma_2'^{(\ell)}}\frac{\left\|  {\boldsymbol{Z}}'^{(\ell-1)} {\boldsymbol{D}}^{1/2}{\boldsymbol{Y}}\right\|}{\left\|  {\boldsymbol{Z}}'^{(\ell-1)} {\boldsymbol{D}}^{1/2}{\boldsymbol{X}}\right\|} \leq (1-\epsilon)\frac{\left\|  {\boldsymbol{Z}}'^{(\ell-1)} {\boldsymbol{D}}^{1/2}{\boldsymbol{Y}}\right\|}{\left\|  {\boldsymbol{Z}}'^{(\ell-1)} {\boldsymbol{D}}^{1/2}{\boldsymbol{X}}\right\|}.
    \end{align}
    \Cref{cor:layer-wise-evo-with-ffn} also confirms that $\left\|{\boldsymbol{Z}}'^{(L)} {\boldsymbol{D}}^{1/2}{\boldsymbol{X}}\right\| > 0$.
    Using an easy induction, we have \begin{align}
\frac{\left\| \boldsymbol{Z}'^{(L)} \boldsymbol{D}^{1/2} \boldsymbol{Y} \right\|}{\left\| \boldsymbol{Z}'^{(L)} \boldsymbol{D}^{1/2} \boldsymbol{X} \right|} \leq (1-\epsilon)^{L-1} \frac{\left\| \boldsymbol{Z}'^{(1)} \boldsymbol{D}^{1/2} \boldsymbol{Y} \right\|}{\left\| \boldsymbol{Z}'^{(1)} \boldsymbol{D}^{1/2} \boldsymbol{X} \right\|}.
    \end{align}
    Therefore, \begin{align}
\lim_{L\to \infty} \frac{\left\| \boldsymbol{Z}'^{(L)} \boldsymbol{D}^{1/2} \boldsymbol{Y} \right\|}{\left\| \boldsymbol{Z}'^{(L)} \boldsymbol{D}^{1/2} \boldsymbol{X} \right\|} = 0.
    \end{align}
    \end{itemize}

    Combining above arguments, we obtain that \begin{align}
    \lim_{L \to \infty} \lim_{n \to \infty}\frac{\left\| {\boldsymbol{V}}_{n}^{(L)} {\boldsymbol{D}}^{1/2}{\boldsymbol{Y}}\right\|}{\left\| {\boldsymbol{V}}_{n}^{(L)} {\boldsymbol{D}}^{1/2}{\boldsymbol{X}}\right\|} = \lim_{L \to \infty} \frac{\left\|{\boldsymbol{Z}}'^{(L)} {\boldsymbol{D}}^{1/2}{\boldsymbol{Y}} \right\|}{\left\|{\boldsymbol{Z}}'^{(L)} {\boldsymbol{D}}^{1/2}{\boldsymbol{X}} \right\|}= 0.
    \end{align}

\section{The Role of FFN: What Mappings are Great Mappings} 

\begin{lemma}\label{lem:great-linear}
    Let $\sigma: \mathbb R^d\to \mathbb R^d$ be defined as $\sigma({\boldsymbol{z}}) = {\boldsymbol{W}}{\boldsymbol{z}}$, where ${\boldsymbol{W}}$ is a non-singular matrix, then $\sigma$ is a great mapping w.r.t. $(\gamma_{\min},\gamma_{\max},\mathbb R^d,{\boldsymbol{U}})$ for any orthonormal matrix ${\boldsymbol{U}}$, where $\gamma_{\min}$ and $\gamma_{\max}$ are the smallest and largest singular values of matrix ${\boldsymbol{W}}$, respectively.
\end{lemma}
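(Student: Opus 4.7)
The proof is essentially immediate from the definition of singular values; no genuine obstacle is expected, so the main work is bookkeeping. The plan is as follows. First, since $\sigma$ is applied column-wise, applying it to a matrix ${\boldsymbol{Z}} \in \mathbb R^{d\times c}$ yields $\sigma({\boldsymbol{Z}}) = {\boldsymbol{W}}{\boldsymbol{Z}}$. Set ${\boldsymbol{M}} = {\boldsymbol{Z}}{\boldsymbol{D}}^{1/2}{\boldsymbol{U}} \in \mathbb R^{d\times p}$, so that $\sigma({\boldsymbol{Z}}){\boldsymbol{D}}^{1/2}{\boldsymbol{U}} = {\boldsymbol{W}}{\boldsymbol{M}}$, reducing the task to proving
\[
\gamma_{\min}\,\|{\boldsymbol{M}}\|_F \;\le\; \|{\boldsymbol{W}}{\boldsymbol{M}}\|_F \;\le\; \gamma_{\max}\,\|{\boldsymbol{M}}\|_F .
\]

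Next, I would decompose the Frobenius norm column-wise. Writing ${\boldsymbol{M}} = [{\boldsymbol{m}}_1,\ldots,{\boldsymbol{m}}_p]$, one has $\|{\boldsymbol{M}}\|_F^2 = \sum_{i=1}^p \|{\boldsymbol{m}}_i\|_2^2$ and $\|{\boldsymbol{W}}{\boldsymbol{M}}\|_F^2 = \sum_{i=1}^p \|{\boldsymbol{W}}{\boldsymbol{m}}_i\|_2^2$. By the variational characterization of singular values, for every vector ${\boldsymbol{m}} \in \mathbb R^d$,
\[
\gamma_{\min}^2 \|{\boldsymbol{m}}\|_2^2 \;\le\; \|{\boldsymbol{W}}{\boldsymbol{m}}\|_2^2 \;\le\; \gamma_{\max}^2 \|{\boldsymbol{m}}\|_2^2,
\]
where $\gamma_{\min} > 0$ because ${\boldsymbol{W}}$ is non-singular. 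Summing over $i$ and taking square roots gives the two desired inequalities.

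Finally, I would observe that the argument places no restriction on ${\boldsymbol{Z}}$ and in fact does not use orthonormality of ${\boldsymbol{U}}$ at all: the column-wise decomposition goes through for any matrix played in the role of ${\boldsymbol{M}}$. Hence the bounds hold uniformly for all ${\boldsymbol{Z}} \in \mathbb R^{d\times c}$, which is exactly the condition that $\sigma$ be a great mapping w.r.t.\ $(\gamma_{\min},\gamma_{\max},\mathbb R^{d},{\boldsymbol{U}})$. The only potential subtlety is ensuring the constants are tight, which follows because $\gamma_{\min}$ and $\gamma_{\max}$ are respectively attained by ${\boldsymbol{W}}$ on its extremal singular vectors, so no sharper constants are possible.
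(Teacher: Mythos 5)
Your proof is correct and is exactly the argument the paper has in mind (the paper simply declares the lemma obvious and omits it): reduce to $\sigma({\boldsymbol{Z}}){\boldsymbol{D}}^{1/2}{\boldsymbol{U}} = {\boldsymbol{W}}\bigl({\boldsymbol{Z}}{\boldsymbol{D}}^{1/2}{\boldsymbol{U}}\bigr)$ and apply the column-wise singular-value bounds $\gamma_{\min}\|{\boldsymbol{m}}\| \leq \|{\boldsymbol{W}}{\boldsymbol{m}}\| \leq \gamma_{\max}\|{\boldsymbol{m}}\|$. Your side remark that orthonormality of ${\boldsymbol{U}}$ is not actually used is also accurate.
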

The proof of \Cref{lem:great-linear} is obvious.

\begin{lemma}\label{lem:great-composition}
    Let $\sigma: \mathbb R^d\to \mathbb R^d$ be a great mapping w.r.t. $(\gamma_1,\gamma_2, \mathscr Z, {\boldsymbol{U}})$ and $\sigma': \mathbb R^d\to \mathbb R^d$ be a great mapping w.r.t. $(\gamma_1',\gamma_2', \mathscr Z, {\boldsymbol{U}})$, then $\sigma_1\circ \sigma_2$ is a great mapping w.r.t. $(\gamma_1\gamma_1',\gamma_2\gamma_2', \mathscr Z, {\boldsymbol{U}})$.
    \end{lemma}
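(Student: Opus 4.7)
The plan is to unfold the definition of great mapping twice and chain the resulting two-sided bounds. Write $h = \sigma' \circ \sigma$ and fix any ${\boldsymbol{Z}} \in \mathscr Z$. The goal is to sandwich $\|h({\boldsymbol{Z}}){\boldsymbol{D}}^{1/2}{\boldsymbol{U}}\|$ between $\gamma_1\gamma_1'\|{\boldsymbol{Z}}{\boldsymbol{D}}^{1/2}{\boldsymbol{U}}\|$ and $\gamma_2\gamma_2'\|{\boldsymbol{Z}}{\boldsymbol{D}}^{1/2}{\boldsymbol{U}}\|$. The first step is to apply the great mapping hypothesis for $\sigma$ at ${\boldsymbol{Z}}$, which gives
\begin{equation}
\gamma_1\left\|{\boldsymbol{Z}}{\boldsymbol{D}}^{1/2}{\boldsymbol{U}}\right\| \;\leq\; \left\|\sigma({\boldsymbol{Z}}){\boldsymbol{D}}^{1/2}{\boldsymbol{U}}\right\| \;\leq\; \gamma_2\left\|{\boldsymbol{Z}}{\boldsymbol{D}}^{1/2}{\boldsymbol{U}}\right\|.
\end{equation}

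Next I would apply the great mapping hypothesis for $\sigma'$ at the point $\sigma({\boldsymbol{Z}})$ to obtain
\begin{equation}
\gamma_1'\left\|\sigma({\boldsymbol{Z}}){\boldsymbol{D}}^{1/2}{\boldsymbol{U}}\right\| \;\leq\; \left\|\sigma'(\sigma({\boldsymbol{Z}})){\boldsymbol{D}}^{1/2}{\boldsymbol{U}}\right\| \;\leq\; \gamma_2'\left\|\sigma({\boldsymbol{Z}}){\boldsymbol{D}}^{1/2}{\boldsymbol{U}}\right\|,
\end{equation}
and then compose the two inequalities: multiplying the lower bounds gives $\gamma_1\gamma_1'\|{\boldsymbol{Z}}{\boldsymbol{D}}^{1/2}{\boldsymbol{U}}\| \leq \|h({\boldsymbol{Z}}){\boldsymbol{D}}^{1/2}{\boldsymbol{U}}\|$, and multiplying the upper bounds gives $\|h({\boldsymbol{Z}}){\boldsymbol{D}}^{1/2}{\boldsymbol{U}}\| \leq \gamma_2\gamma_2'\|{\boldsymbol{Z}}{\boldsymbol{D}}^{1/2}{\boldsymbol{U}}\|$, which is exactly the great-mapping condition at parameters $(\gamma_1\gamma_1',\gamma_2\gamma_2',\mathscr Z,{\boldsymbol{U}})$.

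The one subtle point, and the only real obstacle in the plan, is that applying the hypothesis for $\sigma'$ at $\sigma({\boldsymbol{Z}})$ requires $\sigma({\boldsymbol{Z}}) \in \mathscr Z$, which is not literally guaranteed by the definition as stated. I would handle this by explicitly noting the lemma is used in the paper in the regime where the relevant $\mathscr Z$ is all of $\mathbb R^{d\times c}$ (as in the invocation in \Cref{thm:final}), where the issue is vacuous; more generally one needs the mild additional assumption that $\sigma(\mathscr Z) \subseteq \mathscr Z$, i.e.\ that $\mathscr Z$ is $\sigma$-invariant, and I would state this explicitly in the proof. With that caveat the argument is entirely a two-line chaining and needs no further calculation.
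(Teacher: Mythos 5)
Your proof is correct and is exactly the chaining argument the paper has in mind (the paper simply declares the lemma ``obvious'' and writes nothing). Your observation that the second application of the definition needs $\sigma(\mathscr Z)\subseteq\mathscr Z$ is a genuine imprecision in the lemma as stated --- harmless in the paper's only invocation, where $\mathscr Z$ is the whole space --- and flagging it explicitly is a worthwhile addition rather than a gap in your argument.
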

The proof of \Cref{lem:great-composition} is obvious.

\section{A Generalized Framework That is Independent of Data Generating}\label{sec:generalized-framework-independent-of-dgp}

In the main paper, \Cref{thm:context-wise-convergence} conditions on the specific data generating process used in \cite{iclriclr}. This is because we need to match the distribution of the attention connection to each token with their stationary distribution in the data. In this section, we show that, with a slightly stronger assumption on the attention map, a general result of the context-wise convergence can be derived. 

\begin{definition}\label{def:reflecting-attention-2}
    If a matrix ${\boldsymbol{A}} = \{a_{k,j}\}_{k,j \in [n]} \in \mathbb R^{n \times n}$ is an nice attention map with parameter $\psi$, and there is a mapping $f:  [c] \to [c]$, such that for all $k,j \in [n]$, \begin{align}
    a_{k,j} > 0 \implies x_j = f(x_k),
    \end{align}
    then we say ${\boldsymbol{A}}$ \textbf{reflects} $f$.
\end{definition}

Notice that \Cref{def:reflecting-attention-2} is basically \Cref{def:reflecting-attention} but limits the function $f$ maps each node to only one node, instead of a set of nodes as in \Cref{def:reflecting-attention} (and in this case \cref{eq:condition-balanced-attention-weights} automatically holds). Although this condition seems stronger, we note that following the same idea used in the main paper, that we can compose multiple attention maps into one, this still represents a large family of allowed attention maps. 

Next, we prove a similar result as \Cref{thm:context-wise-convergence} under \Cref{def:reflecting-attention-2} that is independent of input distribution. 

\begin{theorem}
    Suppose ${\mathcal{V}} = \left\{{\boldsymbol{v}}_k\right\}_{k=1}^n \in \left(\mathbb R^d\right)^n$ is a good sequence converging to $\mathcal Z = \left\{{\boldsymbol{z}}_x\right\}_{x \in [c]}$ with parameter $\gamma$, and ${\boldsymbol{A}} = \left\{a_{k,j}\right\}_{k,j \in [n]} \in \mathbb R^{n\times n}$ is a nice attention map with parameter $\psi$ that reflects a function $f: [c] \to [c]$. Then ${\boldsymbol{A}}\mathcal V$ is a good sequence converging to \begin{align}
    \mathcal Z' = \left\{{\boldsymbol{z}}_{f(x)}\right\}_{x \in [c]}
    \end{align}
    with parameter $2\gamma\psi + c(\gamma + 2N)$, where $N = \max_{x \in [c]} \left\|{\boldsymbol{z}}_x\right\|$.
\end{theorem}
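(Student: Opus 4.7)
The plan is to mimic the structure of the proof of \Cref{thm:context-wise-convergence}, but exploit the much stronger hypothesis of \Cref{def:reflecting-attention-2}: since $f$ now outputs a single token rather than a set, every index $j$ with $a_{k,j}>0$ satisfies $x_j=f(x_k)$, so there is a \emph{single} target latent vector ${\boldsymbol{z}}_{f(x_k)}$ that every attended ${\boldsymbol{v}}_j$ is already close to. This collapses the error analysis: no appeal to the empirical frequencies $F_{y,k}$, no stationary-distribution comparison, and no high-probability event from the Markov-chain Hoeffding inequality is needed. In particular, the entire argument becomes deterministic.

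\paragraph{Step 1: reduce to a pointwise bound.}
Write ${\boldsymbol{u}}_k=\sum_{j=1}^k a_{k,j}{\boldsymbol{v}}_j$. For the regime $k>c$ I would note that for every $j\in[k]$ with $a_{k,j}>0$ we have $x_j=f(x_k)$, so ${\boldsymbol{z}}_{x_j}={\boldsymbol{z}}_{f(x_k)}$. Since $\sum_j a_{k,j}=1$, this lets me rewrite
\begin{align}
{\boldsymbol{u}}_k-{\boldsymbol{z}}_{f(x_k)}=\sum_{j=1}^k a_{k,j}\bigl({\boldsymbol{v}}_j-{\boldsymbol{z}}_{x_j}\bigr),
\end{align}
and the good-sequence assumption gives $\|{\boldsymbol{v}}_j-{\boldsymbol{z}}_{x_j}\|\le \gamma/\sqrt{j}$. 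Hence
\begin{align}
\bigl\|{\boldsymbol{u}}_k-{\boldsymbol{z}}_{f(x_k)}\bigr\|\le \gamma\sum_{j=1}^k \frac{a_{k,j}}{\sqrt{j}}.
\end{align}

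\paragraph{Step 2: Abel-summation bound on $\sum_j a_{k,j}/\sqrt{j}$.}
This is the only nontrivial computation, and it is identical to the one appearing in the proof of \Cref{thm:context-wise-convergence}. Setting $S_{k,j}=\sum_{i\le j}a_{k,i}$ and using the niceness bound $S_{k,j}\le \psi j/k$ together with Abel summation and the elementary estimate $\sum_{j=1}^k j^{-1/2}\le 2\sqrt{k}$, I would obtain
\begin{align}
\sum_{j=1}^k \frac{a_{k,j}}{\sqrt{j}}\le \frac{2\psi}{\sqrt{k}}.
\end{align}
Substituting into Step 1 yields $\|{\boldsymbol{u}}_k-{\boldsymbol{z}}_{f(x_k)}\|\le 2\gamma\psi/\sqrt{k}$ for all $k>c$.

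\paragraph{Step 3: handle the short-context regime $k\le c$.}
Here I would not try to be clever: use $\|{\boldsymbol{v}}_j\|\le N+\gamma/\sqrt{j}$, $\|{\boldsymbol{z}}_{f(x_k)}\|\le N$, and $\sum_j a_{k,j}=1$ to get the crude bound
\begin{align}
\bigl\|{\boldsymbol{u}}_k-{\boldsymbol{z}}_{f(x_k)}\bigr\|\le \gamma+2N.
\end{align}
Multiplying by $\sqrt{k}\le\sqrt{c}\le c$, this is at most $c(\gamma+2N)/\sqrt{k}$, which is absorbed into the claimed parameter. Combining with Step 2 gives the uniform bound $\|{\boldsymbol{u}}_k-{\boldsymbol{z}}_{f(x_k)}\|\le (2\gamma\psi+c(\gamma+2N))/\sqrt{k}$ for all $k\in[n]$, matching the theorem.

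\paragraph{Where the difficulty is.}
There really is no hard step here; the only mildly delicate piece is the Abel-summation bound, which is already carried out in the proof of \Cref{thm:context-wise-convergence}. The whole point of this generalized statement is that by strengthening \Cref{def:reflecting-attention} to \Cref{def:reflecting-attention-2}, one eliminates the need to (i) control the empirical frequencies $F_{y,k}$ via the Markov-chain Hoeffding inequality and (ii) compare them to a stationary distribution. Consequently the proof is DGP-independent and deterministic, at the cost of only allowing attention maps whose reflected function points to a single token identity rather than a set.
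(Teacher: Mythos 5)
Your proposal is correct and follows essentially the same route as the paper's own proof: the same reduction $\|{\boldsymbol{u}}_k-{\boldsymbol{z}}_{f(x_k)}\|\le\gamma\sum_j a_{k,j}/\sqrt{j}$ using that every attended token has latent vector ${\boldsymbol{z}}_{f(x_k)}$, the same Abel-summation bound giving $2\psi/\sqrt{k}$, and the same crude absorption of the $k\le c$ regime into the $c(\gamma+2N)$ term. Your framing of why the argument becomes deterministic and DGP-independent matches the paper's stated motivation for this generalized statement.
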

\begin{proof}
        Let ${\boldsymbol{A}} {\mathcal{V}} = \left\{{\boldsymbol{u}}_k\right\}_{k=1}^n$. 
        
        If $k \leq c$, we have \begin{align}
        \left\|{\boldsymbol{u}}_k - {\boldsymbol{z}}_{f(x_k)}\right\| & \leq \left\|{\boldsymbol{u}}_k - z_{x_k}\right\| + \left\|{\boldsymbol{z}}_{f(x_k)}\right\| + \|{\boldsymbol{z}}_{x_k}\| \leq \gamma + 2N \leq \frac{c(\gamma + 2N)}{\sqrt{k}}. 
        \end{align}
        Below, we only need to consider $k > c$. 
        
        We have \begin{align}
        \left\|{\boldsymbol{u}}_k - {\boldsymbol{z}}_{f(x_k)}\right\| & = \left\|\sum_{j \in [k]\atop x_j = f(x_k)} a_{k,j}\left({\boldsymbol{v}}_j - {\boldsymbol{z}}_{f(x_k)}\right)\right\|
        \\ & \leq \sum_{j \in [k]\atop x_j = f(x_k)} a_{k,j} \left\|{\boldsymbol{v}}_j - {\boldsymbol{z}}_{x_j}\right\|
        \\ & \leq \sum_{j \in [k] \atop x_j = f(x_k)} \frac{ a_{k,j} \gamma }{\sqrt{j}}
        \\ & \leq \gamma \sum_{j=1}^k \frac{ a_{k,j} }{\sqrt{j}}.
    \end{align}
    Now, define $S_{k,j} = \sum_{i=1}^j a_{k,j}$ and $S_{k,0} = 0$. Since ${\boldsymbol{A}}$ is nice, we have $S_j \leq \frac{\psi j}{k}$. Notice that $a_{k,j} = S_{k,j} - S_{k,j-1}$. Thus we have \begin{align}
    \frac{1}{\gamma}\left\|{\boldsymbol{u}}_k - {\boldsymbol{z}}_{f(x_k)}\right\|& \leq \sum_{j =1 }^k \frac{ a_{k,j} }{\sqrt{j}}
    \\ & = \sum_{j = 1}^k \frac{1}{\sqrt{j}}\left(S_{k,j} - S_{k,j-1}\right) 
    \\ & = \frac{ S_{k,k} }{\sqrt{k}} + \sum_{j=0}^{k-1} S_{k,j} \left(\frac{1}{\sqrt{j}} - \frac{1}{\sqrt{j+1}} \right)
    \\ & \leq \frac{\psi}{\sqrt{k}} + \frac{\psi}{k} \sum_{j=1}^{k-1}  j\left( \frac{1}{\sqrt{j}} - \frac{1}{\sqrt{j+1}}\right)
    \\ & = \frac{ \psi }{\sqrt{k}} + \frac{\psi}{k} \sum_{j=1}^{k-1}  \frac{1}{\sqrt{j}} - \frac{\psi (k-1)}{k\sqrt{k}}
    \\ & = \frac{\psi}{k} \sum_{j=1}^k\frac{1}{\sqrt{j} }
    \\ & \leq \frac{2\psi}{\sqrt{k}}.
    \end{align}
   Thus we conclude that $\left\|{\boldsymbol{u}}_k - {\boldsymbol{z}}_{f(x_k)}\right\| \leq \frac{2\gamma\psi}{\sqrt{k}}$ for any $k \in [n]$, which means ${\boldsymbol{U}}$ converges to $\left\{{\boldsymbol{z}}_{f(x_k)}\right\}_{k=1}^n$ with parameter $2\gamma\psi$.

\end{proof}

Using, \Cref{lem:convergence-combine} which is also independent of DGP, we can prove the generalized results for a large family of attention maps by combining multiple attention maps that satisfies \Cref{def:reflecting-attention-2}. 

\section{Integrating Other Transformer Components}\label{sec:integrating-other-components}

In this section, we discuss how our theoretical results can extend to more complex and realistic Transformer architectures beyond the simplified model described in \Cref{alg:llm}. We first note that in our framework, any neuron-wise transformation (i.e., operations that apply independently to the representation of each token) can be absorbed into the definition of the great mapping $\sigma$. This includes FFNs as well as normalizations such as LayerNorm or RMSNorm. Therefore, here we focus here two architectural components not yet discussed: residual connections and multi-head attentions.

\paragraph{Residual Connections.}

The most critical step in the proof of \Cref{thm:final} is \Cref{lem:evolution-with-time-with-attn}, which establishes that applying a nice attention map to a good sequence results in another good sequence, and that the attention operates implicitly on the latent representations to which the sequence converges. Introducing residual connections here is straightforward: with residual connection, \cref{eq:good-seq-applied-nice-attention-latent} would become \begin{align}
{\boldsymbol{z}}'_x =  \left(1 + \rho^{(\ell)}_A\right) {\boldsymbol{z}}_x + \rho^{(\ell)}_B\sum_{y \in [c]} \frac{ w_{x,y} }{d_x} {\boldsymbol{z}}_y +  \rho^{(\ell)}_O  \sum_{y \in [c]} \pi_\mathcal G(y) {\boldsymbol{z}}_y + \rho^{(\ell)}_T {\boldsymbol{v}}_1,
\end{align}
which is simply adding $1$ to the $\rho_A^{(\ell)}$ coefficient. This modification only affects the $\delta_q$ term in \Cref{thm:final}, which becomes \begin{align}
\delta_q\left[ (\rho_A + 1) {\boldsymbol{I}} + \rho_B {\boldsymbol{M}} \right],
\end{align}
which makes the spectral gap smaller. This can slower the convergence, but will not prevent it as long as the spectral gap of ${\boldsymbol{M}}$ is large enough.

\paragraph{Multi-head Attentions.}

\Cref{lem:convergence-combine} shows that any Lipschitz combination of good sequences remains a good sequence. Since multi-head attention can be viewed as a Lipschitz combination of multiple single-head attentions, it follows that a multi-head attention mechanism also satisfies \Cref{lem:evolution-with-time-with-attn}, as long as each individual head does. All other parts of the theoretical framework extend accordingly. Notice that the coefficients in \Cref{lem:evolution-with-time-with-attn} may differ by constant factors in the multi-head case, but this does not affect the asymptotic conclusions in \Cref{thm:final}, which only concern limiting behavior.

\section{Empirical Verification}\label{sec:empirical}

As discussed in the main text, \Cref{thm:final} relies on a relatively strong structural assumption about the attention maps. It is therefore essential to verify whether these assumptions hold in practice. In this section, we empirically examine this question.

Specifically, in \Cref{fig:assumptions}, we compute the proportion of A,B,T type attention connections defined in \Cref{sec:layer-wise}. Note that Type O connections (i.e., connecting to arbitrary tokens) are excluded from this analysis, as they cover all positions. For each head, we compute the fraction of attention weights (across all layers) that fall into types A, B, or T. Overlapping cases (e.g., the connection from the second token in the sequence to the first one can be considered both as B and as T type) are counted only once. The figure shows that a large proportion of attention weights (> $72\%$ in total) indeed falls into these structured types, lending empirical support to our theoretical assumptions.

\begin{figure}[h]
\centering
\includegraphics[width=\linewidth]{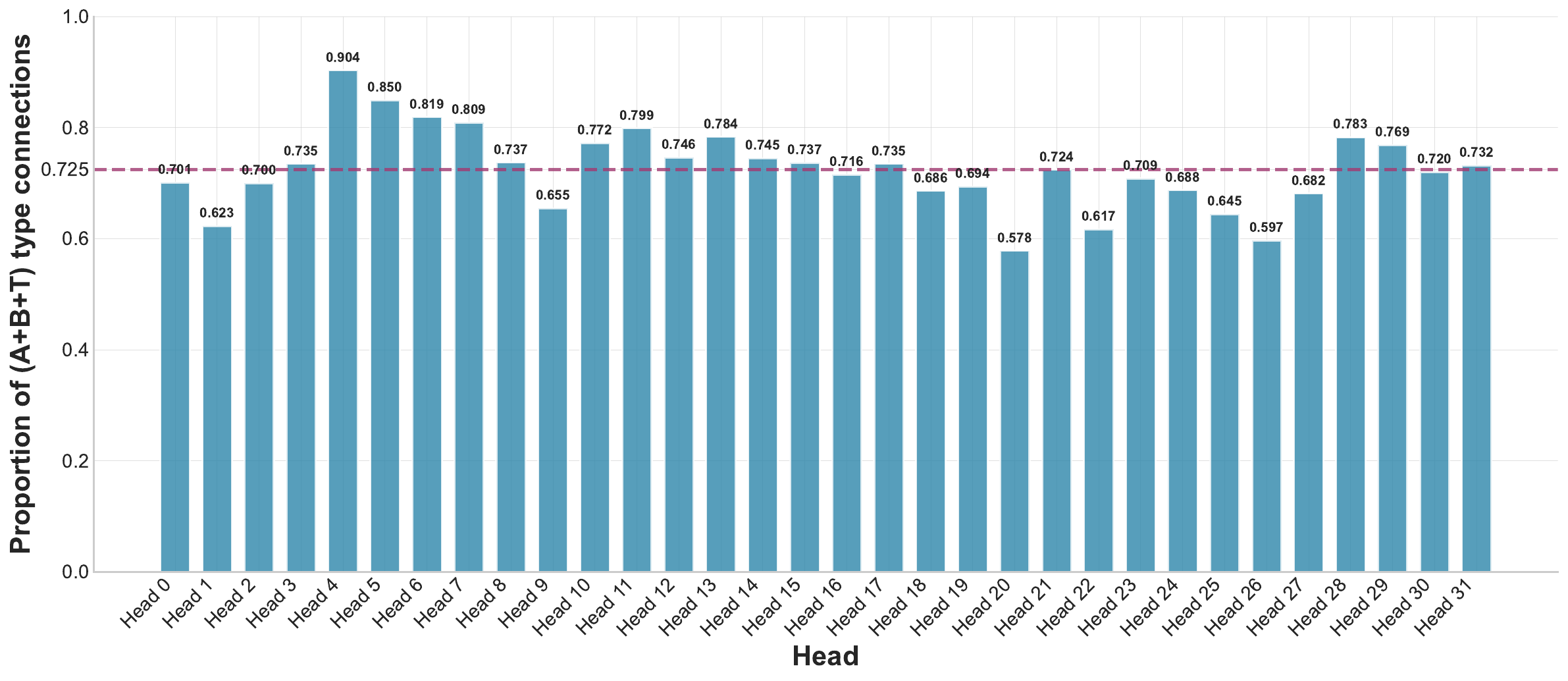}
\caption{\textbf{Proportion of structured attention connections.} For each attention head, we sum attention weights that falls into type A,B and T across all layers, and divide them by total attention weights (which is equal to the number of tokens per layer, since the attention weights are normalized). The dotted horizontal line is the average proportion over all heads.}
\label{fig:assumptions}
\end{figure}

\end{document}